\documentclass[]{article}

\usepackage{microtype}
\usepackage{graphicx}
\usepackage{subfigure}
\usepackage{booktabs}
\usepackage{hyperref}

\usepackage[accepted]{icml2023}

\usepackage{amsmath}
\usepackage{amssymb}
\usepackage{mathtools}
\usepackage{amsthm}

\usepackage[capitalize,noabbrev]{cleveref}



\usepackage{xfrac}
\usepackage{breqn}
\usepackage{mdframed}
\usepackage{thm-restate}

\newcommand{\Ebf}{\mathbf{E}}

\newcommand{\Acal}{\mathcal{A}}

\newcommand{\Hcal}{\mathcal{H}}

\newcommand{\Kcal}{\mathcal{K}}
\newcommand{\Lcal}{\mathcal{L}}

\newcommand{\Ocal}{\mathcal{O}}
\newcommand{\Pcal}{\mathcal{P}}

\newcommand{\Rcal}{\mathcal{R}}
\newcommand{\Scal}{\mathcal{S}}
\newcommand{\Tcal}{\mathcal{T}}
\newcommand{\Ucal}{\mathcal{U}}

\DeclareMathOperator*{\argmin}{arg\,min}
\DeclareMathOperator*{\argmax}{arg\,max}

\newtheorem{theorem}{Theorem}[section]

\newtheorem{lemma}[theorem]{Lemma}

\newtheorem{definition}{Definition}
\newtheorem{assumption}{Assumption}
\newtheorem{axiom}{Axiom}
\newtheorem{example}{Example}

\def\emptytraj{\varepsilon}
\def\prefge{\succsim}
\def\prefgt{\succ}
\def\prefle{\precsim}
\def\preflt{\prec}
\def\defeq{\stackrel{\mathsf{def}}{=}}

\def\hists{{\cal H}}
\def\onehalf{\sfrac{1}{2}}

\def\tfrac#1#2{{\textstyle\frac{#1}{#2}}}

\def\O{{\cal O}}
\def\A{{\cal A}}
\def\S{{\cal S}}

\def\env{e}

\icmltitlerunning{Settling the Reward Hypothesis}

\begin{document}

\twocolumn[
\icmltitle{Settling the Reward Hypothesis}

\icmlsetsymbol{equal}{*}
\begin{icmlauthorlist}
\icmlauthor{Michael Bowling}{equal,alb}
\icmlauthor{John D. Martin}{equal,alb,intel}
\icmlauthor{David Abel}{dm}
\icmlauthor{Will Dabney}{dm}
\end{icmlauthorlist}
\icmlaffiliation{alb}{Amii, University of Alberta}
\icmlaffiliation{dm}{Google DeepMind}
\icmlaffiliation{intel}{Intel Labs}
\icmlcorrespondingauthor{John D. Martin}{john.martin@intel.com}

\icmlkeywords{Reinforcement Learning, Utility Theory, Reward Hypothesis, Goals}

\vskip 0.3in
]

\printAffiliationsAndNotice{\icmlEqualContribution}

\begin{abstract}
The \emph{reward hypothesis} posits that, ``all of what we mean by goals and purposes
can be well thought of as maximization of the expected value of the cumulative sum of a received
scalar signal (reward).'' We aim to fully settle this hypothesis.  This will not conclude with a simple affirmation or refutation, but rather specify completely the implicit requirements on goals and purposes under which the hypothesis holds.
\end{abstract}

\section{Introduction}
The \emph{reward hypothesis} posited by Sutton states that, ``all of what we mean by goals and purposes
can be well thought of as maximization of the expected value of the cumulative sum of a received
scalar signal (reward).''~\citep{suttonwebRLhypothesis, sutton2018reinforcement, littmanwebRLhypothesis}.
This statement takes on considerable import if one also accepts McCarthy's claim that ``Intelligence is the computational part of the ability to achieve goals in the world.''~\citep{mccarthy1998artificial}.
Together these two statements offer a sort of \emph{sufficiency} to the study of reinforcement learning (RL), whose agents learn to achieve goals through the maximization of expected future rewards.
They imply that to succeed at building AI, it is sufficient to succeed at solving RL.\footnote{Furthermore, it is possible these two claims show that RL is necessary as well as sufficient.  
If some artificial system were to be able to achieve all that we mean by goals and purposes, then such a system would have to---at least implicitly---maximize expected cumulative reward. In other words, there must be a reduction between the RL problem and the problem the system solves.}

%
\citet{silver2021reward} propose the related, \textit{reward-is-enough} hypothesis, which posits that
``intelligence, and all of its associated abilities, can be understood as subserving the maximization of reward.'' 
While the two hypotheses are of course deeply connected, we emphasize that our focus is on Sutton's earlier \textit{reward hypothesis}.

Sutton's original hypothesis provides an informal starting point from which to question the expressivity of reward. 
In this vein, \citet{abel2021reward} grounded the notion of ``goals and purposes'' as an ordering over policies and explored whether a Markov reward\footnote{They take a Markov reward function to be one that only depends on the most recent experience of the agent.} function could express these orderings.
They provided examples showing that a Markov reward is unable to express every such ordering. 
Their analysis also reveals how using a behavioral definition of goals can sometimes lead to unsatisfying conclusions. 
In one example (``steady-state type'' failures), an agent needs to experience unrealizable outcomes to achieve their goal.
When viewed through the lens of McCarthy's definition of intelligence, it seems that a behavioral conception of goals deflates the role that computation performs---to one of simply executing a goal's defining policy, or re-expressing the policy in a different form.
Surely intelligence involves more meaningful computation than this?

%
\citet{shakerinava2022utility} take a different approach.  They ground goals and purposes in preference relations over (distributions of) state-trajectories in a controlled Markov process.  In the same spirit of von Neumann-Morgenstern (vNM) utility axioms \citep{vonneumann1953theory}, they propose axioms on the preference relation (including the vNM rationality axioms) which are necessary and sufficient for the preference to be expressed with a Markov reward. Indeed, \citet{shakerinava2022utility} build on work by \citet{pitis2019rethinking} that first analyzed standard objectives of RL from the perspective of decision theory. The work of Pitis can be viewed from two complementary perspectives. First, Pitis provides a normative account for why we should embrace a state-action dependent discount factor, as developed by \citet{white17}: A fixed discount cannot capture all preferences we might consider rational. Second, Pitis presents three axioms on top of the vNM axioms that characterize the conditions under which a state-action dependent discount factor can be viewed as rational.

%
Our work builds off this pair of insightful approaches by starting with preferences over histories. We abandon strictly Markov processes to consider general stochastic environments and policies in line with recent work by \citet{dong2021simple}, \citet{lu2021reinforcement}, and early work on general RL \citep{lattimore2013sample,lattimore2014theory,leike2016nonparametric,majeed2021abstractions}. 
We introduce a new axiom that generalizes previous axioms from \citet{shakerinava2022utility} and accommodates the discounted reward, average reward \citep{mahadevan1996average}, and episodic settings.
Our approach posits ``goals and purposes'' as preceding environment dynamics, giving space for the agent's computational role of learning representations and behavior necessary to accomplish a goal.  
Using our new axiom along with the standard vNM rationality axioms, we provide a treatment of the reward hypothesis in both the setting that goals are the subjective desires of the agent and in the setting where goals are the objective desires of an agent designer. 
Altogether, our account does not give a simple affirmation or refutation of the reward hypothesis, but rather aims to completely specify the implicit requirements on goals and purposes under which the hypothesis holds.

\section{The Reward Hypothesis}

As we aim to settle the reward hypothesis, the first step in doing so is to formalize what it claims, and to do so in as much generality as possible. We do this by stating a series of assumptions for each of the phrases in the claim.

\subsection{Goals as Preferences}

We ground ``all of what we mean by goals and purposes'' with a binary preference relation expressing preference for one outcome over another.

%
The core of agent interaction is the cycle of repeatedly observing the environment and taking action to affect the environment.  Let $\O$ be a finite set of observations, and $\A$ a finite set of actions.\footnote{We assume $\Ocal$ and $\Acal$ are finite, but suspect the results generalize to the case where they are simply countable sets.}  A history is then a sequence $o_1, a_1, o_2, a_2, \ldots$ with $\emptytraj$ as the empty history of zero length.  We define the set of histories of length $n \in \mathbb{N}_{\geq 0}$ as $\hists_n \defeq (\O \times \A)^n$, and all finite length histories as $\hists \defeq \bigcup_{n=1}^{\infty} \hists_n$.  For $h \in \hists$ and transition $t\in \O \times \A$, let $t \cdot h \in \hists$ be the history with $t$ prepended to $h$.

%

In deterministic settings, preferences are over histories.
However, when the environment or agent behavior are stochastic, we consider distributions over histories, $\Delta(\hists)$.\footnote{We use $\Delta(\S)$ to refer to the set of all probability distributions with finite support over a countable set $\S$.} Given $A, B \in \Delta(\hists)$ and $p \in [0, 1]$, let $pA + (1-p)B \in \Delta(\hists)$ be the distribution that samples a history from $A$ with probability $p$ and $B$ with $(1-p)$. For $A \in \Delta(\hists)$ and $t \in \O \times \A$, let $t\cdot A \in \Delta(\hists)$, be the distribution where $t$ is prepended to a history sampled from $A$.

The reward hypothesis posits a ``received scalar signal''.  This could mean that the posited scalar reward signal is present in the agent's received observation, or can be computed by the agent from it.  Alternatively, it could mean that there is an additional scalar signal provided to the agent by an external observer.  We call the first setting \emph{subjective goals}, as the posited reward signal can be constructed from the agent's subjective observation, and the latter case correspondingly \emph{objective goals}.  We first develop our main result with the subjective goals setting, but will later broaden the result to objective goals.

%
\begin{assumption}[Subjective Goals]
``All of what we mean by goals and purposes'' can be expressed as a binary preference relation on distributions over finite histories.  For $A, B \in \Delta(\hists)$, we write $A \prefge B$ if $A$ is weakly preferred to $B$, meaning that either $A$ is strictly preferred $B$, or the two are indifferently preferred, which we denote $A\sim B$.

\label{assumption:subjective-goals}
\end{assumption}

Notice that our notion of ``goals and purposes'' make no reference to the environment.  Goals are stated as desirable histories, whereas the environment will act to constrain what histories and distributions over histories are possible.  An agent's behavior, along with the environment, then induces a distribution over histories.  Formally, given an environment $\env : \hists \rightarrow \Delta(\O)$ and policy $\pi : \hists \times \O \rightarrow \Delta(\A)$, let $D^\pi_n$ be the distribution over $\hists_n$ induced by $\env$ and $\pi$.  
\begin{multline*}
D^\pi_n \defeq \Pr\left[o_1, a_1, \ldots, o_n, a_n | \env, \pi\right] = \\
  \prod_{i=1}^n \env(o_i | o_1, a_1, \ldots a_{i-1}) \pi(a_i | o_1, a_1, \ldots, o_i).
\end{multline*}
While $D^\pi_n$ depends on the environment $\env$, we do not parameterize it as such since $\env$ typically is fixed throughout.
We assume preferences over agent policies are then consistent with the distributions over histories that they induce. 
When comparing policies, we write $\pi_1 \prefge_g \pi_2$ to mean $\pi_1$ is weakly preferred to $\pi_2$ under the goal $g$.

%
\begin{assumption}[Policy Preferences]
\label{assumption:policy-prefs}
We weakly prefer $\pi_1 \prefge_g \pi_2$ in $\env$ if and only if there exists $N$ such that $D^{\pi_1}_n \prefge D^{\pi_2}_n$ for all $n \ge N$. 
\end{assumption}

This notion of eventually preferring one policy's history distribution to another allows us the generality of goals and purposes that can be achieved in a defined time frame as well as those of a continuing nature. This assumption is just one simple way of handling infinite sequences, but it is not the only one. For instance, \citet{sobel1975ordinal} and \citet{pitis2019rethinking} propose alternative resolutions (horizon continuity and countable transitivity---see details in cited resources).



\subsection{Maximizing Cumulative Sums}

We now consider what the reward hypothesis says about these goals and purposes.  First, let us examine what is entailed by the ``maximization of the cumulative sum'' of a scalar reward.  This is clearly the domain of reinforcement learning, of which this problem takes a number of forms, including episodic total reward with absorbing states, infinite sum of discounted rewards, and average reward. We want to unify all of these formalisms in what could be meant by maximizing cumulative sums of rewards, and do so employing White's generalization of transition-dependent discounting~\citep{white17}.
When comparing policies under a reward $r$, we write $\pi_1\prefge_r \pi_2$.

%
\begin{assumption}[Cumulative Sum of Rewards]
The ``maximization of the expected value of the cumulative sum of a received scalar signal (reward)'' means that there is a reward function $r : \O \times \A \rightarrow \mathbb{R}$ and transition-dependent discount function $\gamma : \O\times\A \rightarrow [0, 1]$, such that we weakly prefer $\pi_1 \prefge_r \pi_2$ under our reward if and only if there exists $N$ such that $V^{\pi_1}_n \ge V^{\pi_2}_n$ for all $n \ge N$, where
\begin{equation}\label{eq:nstep_value}
V^\pi_n \defeq E\left[\sum_{i=1}^n\left(\prod_{j=1}^{i-1}\gamma(O_j, A_j)\right) r(O_i, A_i) \biggr| \pi, \env\right].
\end{equation}
\end{assumption}

Notice how \eqref{eq:nstep_value} simultaneously captures objectives for the average reward, discounted reward, and episodic settings.  
If $\gamma(t) = 1$ for some $o,a$ pair $t$, the objective corresponds to a typical total reward or average reward setting (even if the sum of the reward is not bounded).  
If $\gamma(t) = \gamma < 1$ is constant for all $t$, then the objective corresponds to a discounted reward objective.  If $\gamma(t) = 0$ infinitely often then the objective can correspond to an episodic setting. 
In the Appendix, we expand on the average reward case and show how the notion of the expected cumulative sum eventually being larger allows us to capture multiple kinds of optimality.

We can now state what we take the reward hypothesis to mean under all of these assumptions.

%
\begin{mdframed}
\vspace{1mm}
\begin{assumption}[Reward Hypothesis]
\label{asmpt:reward_hyp}
What the reward hypothesis means by ``well thought of'' is that for any preference relation on distributions of histories there exists $r$ and $\gamma$ such that $\pi_1 \prefge_g \pi_2$ under the goal $g$ iff $\pi_1 \prefge_r \pi_2$. 
\end{assumption}
\end{mdframed}

In what follows, we explore if this is true or false, or more precisely, what might be required of our preference relation for the reward hypothesis to hold.

\section{Rationality Axioms}

The vNM axioms provide necessary and sufficient conditions for a preference relation to be expressible as the expectation of some scalar-valued function of outcomes.  The ``expected value of the cumulative sum'' of rewards is central to the reward hypothesis, so we present these axioms here along with the corresponding vNM utility theorem.  In the statement of these and additional axioms, $\hists$ is any set of finite length sequences of some countable set of transitions $T$ (e.g., $T$ may be $\O \times \A$ as in the subjective goals case of \autoref{assumption:subjective-goals}).

We next state each of the four vNM axioms alongside some brief intuition.

%
\begin{axiom}[Completeness]
\label{axiom:completeness}
For all $A, B \in \Delta(\hists)$, $A\prefge B$ or $B \prefge A$ (or both, if $A \sim B$).
\end{axiom}

Completeness requires that the preference ordering make \textit{some} judgment about any pair of distributions. Note that the preference \textit{could} simply convey indifference: we might be equally satisfied with an apple and a banana. This is distinct from having no preference at all (\citet{chang2015value} discusses the incomparability of virtues like ``justice'' and ``mercy''). Note that there are alternative sets of axioms that simply remove completeness, as developed by \citet{aumann1962utility}.

%
\begin{axiom}[Transitivity]
\label{axiom:transitivity}
For all $A, B, C \in \Delta(\hists)$, if $A\prefge B \prefge C$, then $A \prefge C$.
\end{axiom}

Transitivity is relatively straightforward: no coherent goal can involve cyclical preferences.

\begin{axiom}[Independence]
\label{axiom:independence}
For all $A, B, C \in \Delta(\hists)$ and $p \in (0, 1)$, $A \prefge B$ if and only if
\[
pA + (1-p) C \prefge pB + (1-p) C
\]
\end{axiom}

%
Independence was historically viewed as an unlikely axiom, and one that led to skepticism about vNM's initial results \citep{machina1990expected}. However, it does convey a relatively powerful intuition, once it is unpacked.
%
Consider the following example. Suppose we must choose between an Apple, a Banana, or Chocolate. We are asked: (1) Do you prefer one Apple to one Banana? and (2) Consider two coins of the same weight which, when flipped can yield \{H: Apple, T: Chocolate\}, and \{H: Banana, T: Chocolate\}---which coin do you prefer to flip? Independence requires that for \textit{all} coin weights, your answer to (1) and (2) must be the same. In other words, if you truly prefer an Apple to a Banana, then the \textit{chance} of procuring the Apple and Banana should not change this preference (when the other alternatives are the same). Independence is deeply connected to many alternative objectives in RL such as risk-aversion and multi-objective RL, which we discuss in more detail shortly.

\begin{axiom}[Continuity]
\label{axiom:continuity}
For all $A, B, C \in \Delta(\hists)$ if  $A\prefge B \prefge C$, then there exists $p \in [0, 1]$ such that, 
\[
pA + (1-p) C \sim B
\]
\end{axiom}

%
Continuity demands the existence of a break-even point when one becomes indifferent to a mixture of outcomes that are individually more or less preferred.
At what precise coin weight does one become indifferent between a Banana and a distribution of Apple and Chocolate? 

These four axioms comprise the typical account of rational preferences under vNM's theory. We next state the classical vNM result.

%
\begin{theorem}[von Neumann-Morgenstern Utility Theorem]
A binary preference relation on $\Delta(\hists)$ satisfies axioms 1-4 if and only if there exists a utility function $u :\Delta(\hists) \rightarrow \mathbb{R}$ such that,
\begin{enumerate}
    \item $\forall A, B \in \Delta(\hists)\quad A \prefge B \Leftrightarrow u(A) \ge u(B),$
    \item $\forall (\sum_i p_i h_i) \in \Delta(\hists)\quad u(\sum_i p_i h_i) = \sum_i p_i u(h_i)$,
\end{enumerate}
and $u$ is unique up to positive affine transformations.
\label{thm:vnm}
\end{theorem}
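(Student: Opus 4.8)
The plan is to prove both directions of the biconditional, with the forward direction (linear $u$ implies the axioms) being routine and the reverse direction (constructing $u$ from the axioms) carrying all the content. For the easy direction, I would assume a $u$ satisfying properties 1--2 exists and check each axiom in turn: \autoref{axiom:completeness} and \autoref{axiom:transitivity} are inherited from the total order of $\ge$ on $\mathbb{R}$; \autoref{axiom:independence} follows because property 2 gives $u(pA+(1-p)C)=pu(A)+(1-p)u(C)$, so cancelling the common term $(1-p)u(C)$ preserves the inequality whenever $p>0$; and \autoref{axiom:continuity} follows from the intermediate value theorem applied to the map $p\mapsto pu(A)+(1-p)u(C)$, which sweeps continuously from $u(C)$ up to $u(A)$ and therefore attains the intermediate value $u(B)$.

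For the reverse direction I would first establish two workhorse lemmas. \textbf{(i) Monotonicity in the mixing weight:} if $A\prefgt B$ and $0\le q<p\le 1$, then $pA+(1-p)B\prefgt qA+(1-q)B$; this is proved by writing the heavier mixture as $\lambda A+(1-\lambda)\bigl(qA+(1-q)B\bigr)$ with $\lambda=(p-q)/(1-q)\in(0,1]$ and applying \autoref{axiom:independence} twice. \textbf{(ii) Uniqueness of the indifference probability:} if $A\prefgt B\prefgt C$, the $p$ furnished by \autoref{axiom:continuity} with $pA+(1-p)C\sim B$ is unique, since two distinct values would be strictly ordered by Lemma (i) yet both indifferent to $B$, contradicting \autoref{axiom:transitivity}.

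Next I would construct $u$. In the trivial case where $A\sim B$ for all $A,B$, take $u\equiv 0$. Otherwise fix a reference pair $\bar A\prefgt\underline A$ and normalize $u(\underline A)=0$, $u(\bar A)=1$. For any $B$ with $\bar A\prefge B\prefge\underline A$, define $u(B)$ to be the unique indifference probability from Lemma (ii), so that $B\sim u(B)\,\bar A+(1-u(B))\,\underline A$. Elements outside this band are handled by inverting the indifference relation: for $B\prefgt\bar A$, \autoref{axiom:continuity} yields $q\in(0,1)$ with $\bar A\sim qB+(1-q)\underline A$, which forces $u(B)=1/q>1$; the case $B\preflt\underline A$ is symmetric and yields $u(B)<0$. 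Order-preservation (property 1) is then immediate from Lemmas (i)--(ii).

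The main obstacle is verifying the linearity identity (property 2) globally and confirming the extension is consistent, precisely because $\hists$ is countably infinite and the preference need not admit a best or worst element, so the textbook shortcut of writing every outcome as a mixture of a global maximum and minimum is unavailable. I would prove $u(pX+(1-p)Y)=pu(X)+(1-p)u(Y)$ by reducing each of $X$ and $Y$ to its equivalent reference-mixture, substituting these into $pX+(1-p)Y$ via \autoref{axiom:independence}, and collapsing the resulting double mixture using Lemma (ii); the conceptual guarantee that this goes through for arbitrary finite-support distributions is the mixture-space (Herstein--Milnor) structure underlying the axioms. Finally, for uniqueness up to positive affine transformation, I would observe that any two linear representations $u,v$ are pinned down on the reference band by the \emph{same} indifference probabilities, so $v=\alpha u+\beta$ with $\alpha=v(\bar A)-v(\underline A)>0$ and $\beta=v(\underline A)$, and linearity then propagates this identity to all of $\Delta(\hists)$.
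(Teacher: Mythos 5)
The paper offers no proof of this statement to compare against: it presents \autoref{thm:vnm} as the classical utility theorem of \citet{vonneumann1953theory}, imported as a black box (the appendix proofs only \emph{apply} it, via its two consequences). So your reconstruction has to stand on its own, and on its merits it is the standard proof, correctly organized. The forward direction is routine as you say, and your two workhorse lemmas---mixture monotonicity and uniqueness of the indifference probability---are exactly the right intermediate facts; your calibration of $u$ against a reference pair $\bar A \prefgt \underline{A}$, extended outside the band by inverting the indifference relation (giving $u(B)=1/q>1$ above and $u(B)<0$ below), is the standard device for coping with the fact that $\Delta(\hists)$ need not contain a best or worst element, which you correctly identify as the obstacle to the textbook shortcut. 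The one place where your plan is a sketch rather than a proof is the step you yourself flag: establishing mixture linearity globally requires either a tedious case analysis across the three regions (inside, above, and below the reference band), or an appeal to the Herstein--Milnor mixture-space theorem. That appeal is legitimate---finite-support distributions over a countable set do form a mixture space, and the vNM axioms imply the Herstein--Milnor axioms---but be aware that invoking it wholesale essentially subsumes your entire construction, so for a self-contained proof you would want to carry out the case analysis explicitly. Nothing in the outline would fail; it is a correct plan for a result the paper itself leaves to the literature.
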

In other words, there exists a utility function whose expectation for any distribution over histories is consistent with the preference relation.

These rationality axioms are sufficient for our interpretation of the reward hypothesis (\autoref{asmpt:reward_hyp}) to trivially hold with no further assumptions.  Simply define the ``received'' reward for experiencing transition $t$ following history $h$ as the change in utility from appending transition $t$ to $h$, written as $r(t ; h) = u(h \cdot t) - u(h)$.  However, this reward function is not, in general, computable from the agent's received observation as it depends on the entire history. 
This definition of reward function may not even be computable by a bounded agent or a bounded external observer as, in general, it requires complete memory of the history.  We will need to add an additional axiom for the reward hypothesis to hold for a Markov reward; that is, a reward that is received or computable from the agent's most immediate experience.

\section{A New Axiom: Temporal $\gamma$-Indifference}
Here we derive a new axiom that ensures the existence of Markov rewards and implicitly specifies the type of objective faced by an agent. 
We start from an observation about how preferences can still encode statements about magnitude, even without reducing them to a scalar value.

%
Let $A, B, C, D \in \Delta(\hists)$ with $A \prefge B$ and $C \prefge D$.  Suppose we wanted to state that $A$ is preferred over $B$ \emph{by the same amount} as $C$ is preferred over $D$.  If we could encode preferences as utilities, then we could write
\begin{align*}
u(A) - u(B) &= u(C) - u(D),\\
u(A) + u(D) &= u(B) + u(C), \\
\onehalf u(A) + \onehalf u(D) &= \onehalf u(B) + \onehalf u(C),\\
u(\onehalf A + \onehalf D) &= u(\onehalf B + \onehalf C).
\end{align*}
Notice that now we are just comparing utilities of two distributions.  We could equivalently state this entirely through the preference relation:
\[
\onehalf A + \onehalf D \sim \onehalf B + \onehalf C.
\]

Furthermore, suppose we wanted to state that $A$ is preferred to $B$ by a multiplicative factor $\alpha  > 0$ of how much $C$ is preferred to $D$.  Again, if we had an equivalent utility function we could write  
\begin{align*}
u(A) - u(B) &= \alpha(u(C) - u(D)), \\     
u(A) + \alpha u(D) &= u(B) + \alpha u(C), \\
\tfrac{1}{1+\alpha} u(A) + \tfrac{\alpha}{1+\alpha} u(D) &=
\tfrac{1}{1+\alpha} u(B) + \tfrac{\alpha}{1+\alpha} u(C), \\
u\left(\tfrac{1}{1+\alpha} A + \tfrac{\alpha}{1+\alpha} D\right) &=
u\left(\tfrac{1}{1+\alpha} B + \tfrac{\alpha}{1+\alpha} C\right), \\
\tfrac{1}{1+\alpha} A + \tfrac{\alpha}{1+\alpha} D &\sim
\tfrac{1}{1+\alpha} B + \tfrac{\alpha}{1+\alpha} C .
\end{align*}
So we can write the same concept entirely within the preference relation.

Now if we consider a transition $t\in T$ and two distributions over histories $A,B\in\Delta(\hists)$, we can use the above to state that $t\cdot A$ is preferred to $t\cdot B$ by a multiplicative factor $\gamma(t)$ of the amount $A$ is preferred to $B$:
\begin{align*}
\tfrac{1}{\gamma(t) + 1}(t \cdot A) + \tfrac{\gamma(t)}{\gamma(t) + 1} B \sim
\tfrac{1}{\gamma(t) + 1}(t \cdot B) + \tfrac{\gamma(t)}{\gamma(t) + 1} A.
\end{align*}
This brings us to what we call the Temporal $\gamma$-Indifference axiom.

%
\begin{axiom}[Temporal $\gamma$-Indifference]
\label{axiom:temporal_gamma_indifference}
For all $A, B \in \Delta(\hists)$ and transitions $t \in T$,
\begin{align*}
    \tfrac{1}{\gamma(t) + 1}(t \cdot A) + \tfrac{\gamma(t)}{\gamma(t) + 1} B \sim 
\tfrac{1}{\gamma(t) + 1}(t \cdot B) + \tfrac{\gamma(t)}{\gamma(t) + 1} A.
\end{align*}
\end{axiom}
Notice this axiom is parameterized by a discount function $\gamma : T \rightarrow [0, 1]$ defined on transitions $T$.

The axiom essentially requires that $t\cdot A$ is preferred to $t\cdot B$ by a multiplicative factor $\gamma(t)$ of how much $A$ is preferred to $B$.
This fact is illustrated in the following example.

%
\begin{example}
Suppose $\gamma(t) = 1$ for all transitions $t \in T$.  Then, the temporal indifference axiom states that for all $h_1, h_2 \in \hists$ and transitions $t \in T$, 
\[
\onehalf(t \cdot h_1) + \onehalf h_2 \sim \onehalf (t \cdot h_2) + \onehalf h_1.
\]
In other words, given any two histories to be experienced with equal probability, the agent is indifferent to which history gets prepended with a transition, regardless of the transition.  No matter which history is prepended with $t$, the transition $t$ must be experienced.  So the indifference is requiring the agent has no preference over which history is delayed, even if one history is highly preferred to the other.
\end{example}

We now state our main result. All proofs are included in the Appendix.

%
\begin{restatable}[Markov Reward Theorem]{theorem}{markovrewardtheorem}
\label{thm:markov-reward}
A binary preference relation on $\Delta(\hists)$ satisfies Axioms 1-5 if and only if there exists a utility function $u\colon \Delta(\hists) \rightarrow \mathbb{R}$, a reward function $r : T \rightarrow \mathbb{R}$, and transition-dependent discount function $\gamma : T \rightarrow [0, 1]$, such that $u(\emptytraj) \defeq 0$, and
\begin{align*}
    u(t \cdot h) &\defeq r(t) + \gamma(t) u(h),
\end{align*}
under the following conditions.
\begin{enumerate}
    \item $\forall A, B \in \Delta(\hists)\; A \prefge B \Leftrightarrow u(A) \ge u(B),$
    \item $\forall (\sum_i p_i h_i) \in \Delta(\hists)\; u(\sum_i p_i h_i) = \sum_i p_i u(h_i)$,
\end{enumerate}
where $r$ is unique up to a positive scale factor, and $\gamma$ is the function for which Axiom 5 is satisfied. 
\end{restatable}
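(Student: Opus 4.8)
The plan is to prove the biconditional in two directions, letting the vNM Utility Theorem (\autoref{thm:vnm}) handle Axioms 1--4 and letting the new Temporal $\gamma$-Indifference axiom (\autoref{axiom:temporal_gamma_indifference}) supply exactly the recursive structure $u(t\cdot h)=r(t)+\gamma(t)u(h)$.

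For the forward direction, I would first apply \autoref{thm:vnm} to Axioms 1--4 to obtain a utility $u$ satisfying conditions (1) and (2) and unique up to positive affine transformation, and then use the additive freedom to normalize $u(\emptytraj)\defeq 0$. The crux is extracting the recursion from \autoref{axiom:temporal_gamma_indifference}. Writing $\gamma=\gamma(t)$, I would use condition (1) to turn the stated indifference into an equality of utilities, use condition (2) to distribute $u$ across both mixtures, and clear the common denominator $\gamma+1$, obtaining
\[
u(t\cdot A) - \gamma(t)\,u(A) = u(t\cdot B) - \gamma(t)\,u(B)
\]
for all $A,B\in\Delta(\hists)$. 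This shows the map $A\mapsto u(t\cdot A)-\gamma(t)u(A)$ is constant on $\Delta(\hists)$; I define $r(t)$ to be that constant, which immediately yields $u(t\cdot h)=r(t)+\gamma(t)u(h)$. Evaluating the constant at the point mass on $\emptytraj$ and using $u(\emptytraj)=0$ identifies $r(t)=u(t\cdot\emptytraj)$, so the recursion closes consistently with the normalization and reduces on point masses to the discounted-sum form of \eqref{eq:nstep_value}.

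For the backward direction, given $u,r,\gamma$ with conditions (1), (2) and the recursion, Axioms 1--4 follow at once from the converse direction of \autoref{thm:vnm}, since any relation represented by an affine $u$ is complete, transitive, independent, and continuous. To verify Axiom 5, I would first lift the recursion from histories to distributions: for $A=\sum_i p_i h_i$, condition (2) together with the per-history recursion gives $u(t\cdot A)=r(t)+\gamma(t)u(A)$. Substituting this into the utilities of the two mixtures in \autoref{axiom:temporal_gamma_indifference} and simplifying shows that both sides equal
\[
\tfrac{r(t)}{\gamma(t)+1} + \tfrac{\gamma(t)}{\gamma(t)+1}\bigl(u(A)+u(B)\bigr),
\]
so they have equal utility and are therefore indifferent by condition (1). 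For the uniqueness claim, the normalization $u(\emptytraj)=0$ forces the additive constant in vNM's affine freedom to vanish, leaving $u$ unique up to a positive scale $c$; since $r(t)=u(t\cdot\emptytraj)$ then scales by $c$ while the indifference relation defining $\gamma$ is scale-invariant, $r$ is unique up to a positive scale factor and $\gamma$ is exactly the function for which Axiom 5 holds.

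I expect the main obstacle to be purely the bookkeeping around the empty history: ensuring that $\emptytraj$ can serve as a legitimate anchor so that the constant $r(t)$ delivered by Axiom 5 agrees with $u$ on length-one histories, and that normalizing $u(\emptytraj)=0$ is compatible with the recursion rather than over-constraining $u$. Once linearity of $u$ is used to translate the axiom into a statement about differences of utilities, the remaining steps are routine algebra.
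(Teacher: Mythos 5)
Your proposal is correct and follows essentially the same route as the paper's proof: vNM handles Axioms 1--4, Axiom 5 is converted via conditions (1) and (2) into the identity $u(t\cdot A)-\gamma(t)u(A)=u(t\cdot B)-\gamma(t)u(B)$ yielding the recursion with $r(t)=u(t\cdot\emptytraj)$, and the backward direction reverses the same algebra. If anything, your version is slightly more careful than the paper's, since you lift the recursion to arbitrary distributions via affine linearity before verifying Axiom 5, whereas the paper's converse checks the indifference only for point-mass histories $h_1,h_2$.
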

 In other words, there exist a deterministic, Markov reward function such that the expected sum of rewards under a particular transition-dependent discount factor is consistent with the preference relation.
 Furthermore, we can show there exists an efficient algorithm that constructs the reward function and discount factor from a preference relation that satisfies Axioms 1--5 (See \autoref{alg:reward_design} in the Appendix for additional details.). 

Additionally, notice the form the objective takes (e.g. discounted reward, episodic total reward, average reward) is determined by the preference relation and how it satisfies the Temporal $\gamma$-Indifference axiom.
This is what ultimately dictates $\gamma(t)$.

\section{Objective Goals}

\label{sec:obj_goals}

%
The results thus far assume that the preferences and rewards of interest originate from the same perspective---that is, the agent doing the maximizing is the same as the agent holding the preferences. 
In practice, we often find ourselves in a different setting in which the relevant preferences originate from an \textit{agent designer} that has a desired goal or purpose in mind for a separate \textit{learning agent} to pursue (Alice and Bob in work by \citet{abel2021reward}, or the ``designer'' and ``agent'' in work by \citet{singh2009rewards}). 
In this section we adapt our previous assumptions to show how the results of \autoref{thm:markov-reward} apply more broadly---to arbitrary sequences that contain the designer's experiences. 
This is what we refer to as the \textit{objective goals} case.
Indeed, this setup includes common cases from the literature where preferences are expressed in numerous ways: as demonstrations of desired behaviors \citep{ng2000algorithms}, partial orders over a set of trajectories \citep{wirth2017survey}, or through a generic interaction process with the designer \citep{leike2018scalable}.

%
In the objective goals setting, the designer experiences a stream of observations $\bar{o} \in \bar{\Ocal}$.
These form a distinct process which is potentially related to the observation stream of the agent.
For instance, the designer may observe more than the agent: $\Ocal \subset \bar{\Ocal}$.
In other cases, $\bar\Ocal$ may include the agent's actions.
The designer provides the agent with a learning signal that reflects its preferences on distributions over histories $\bar{h}_t = \bar{o}_1, \bar{o}_2, \ldots, \bar{o}_t$, where each $\bar{o}_i \in \bar{\Ocal}$. 
We let $\bar{\hists}_n$ be all histories of length $n$ and $\bar{\hists} \defeq \bigcup_{n=0}^\infty \bar{\hists}_n$ be all finite length histories over designer observations.
In what follows, we suppose the designer maintains a preference relation over distributions from $\Delta(\bar{\hists})$, then we adapt our assumptions so the results of \autoref{thm:markov-reward} apply to this setting.

%
\begin{assumption}[Objective Goals]
``All of what we mean by goals and purposes'' can be expressed as a binary preference relation on distributions over finite histories of designer observations $\Delta(\bar{\hists})$. 
\end{assumption}

We have two choices here for defining the agent's interface to the environment.
First, the designer can provide the rewards $r$ and the discounts $\gamma$ as separate inputs to the agent.
Second, the designer can provide rewards that are already discounted,
\begin{align}\label{eq:prediscounted_rewards}
    r_i\defeq \left(\prod_{j=1}^{i-1}\gamma(\bar{o}_j)\right) r(\bar{o}_i).
\end{align}
We adopt the second view to maintain the standard agent-environment interface, but note that the former view might yield an alternative plausible account.

%
\begin{assumption}[Cumulative Sum of Objective Rewards]
\label{assumption:sum_of_rewards_policy_pref}
The ``maximization of the expected value of the cumulative sum of a received scalar signal (reward)'' means that there is a reward function $r : \bar\Ocal \rightarrow \mathbb{R}$ and transition-dependent discount function $\gamma : \bar\Ocal \rightarrow [0, 1]$, such that a designer prefers $\pi_1 \prefge_r \pi_2$ under the reward $r$ if and only if there exists $N$ such that  $V^{\pi_1}_n > V^{\pi_2}_n$ for all $n \ge N$, where $V^\pi_n = E\left[\sum_{i=1}^n r_i\right].$
\end{assumption}

%
%
With this we can now restate our interpretation of the reward hypothesis for the objective goals case.
What the reward hypothesis means by ``well thought of'' is that for any binary preference relation on distributions of a designer's histories, there exists an already discounted $r$, as defined in \eqref{eq:prediscounted_rewards}, such that $\pi_1 \prefge_g \pi_2$ if and only if $\pi_1 \prefge_r \pi_2$.

%
%

\section{History and Related Work}
We next discuss relevant literature from across the RL community. We discuss connections to the economics literature in the appendix.


%

\subsection{Utility Theory for Sequential Decision Making}


%
\citet{pitis2019rethinking} explored the relationship between the vNM axioms and the typical objectives of RL, with a focus on the discount factor, $\gamma$. As discussed in the introduction, Pitis provides a normative account for why we should embrace a state-action dependent discount factor, as developed by \citet{white17}: A fixed discount cannot capture all preferences we might consider rational. Second, Pitis presents three axioms on top of the vNM axioms that characterize the conditions under which a state-action dependent discount factor can be viewed as rational. These axioms bear some resemblance to aspects of our formalism: For instance, Pitis' countable transitivity axiom also addresses the issue of infinite experiences, like our \autoref{assumption:policy-prefs}.

%
\citet{shakerinava2022utility} build off the work of \citet{pitis2019rethinking} to study the existence of reward functions in a variety of controlled MDPs.
In their work, Shakerinava \& Ravanbakhsh formalize the notion of goals with a preference relation over outcomes generated from the given MDP.

Our work differs from \citet{shakerinava2022utility} in several ways. 
Firstly, we take into account preference relations over distributions of observation-action histories, enabling us to reason about goals in a wide range of stochastic environments, including fully-observable MDPs.
Furthermore, we formalize the reward hypothesis with a set of formal assumptions.
This applies to the discounted reward, average reward, and episodic total reward settings. 
We establish a connection between our findings and scenarios where the reward is internally generated by the agent, as well as situations involving the objective desires of an observer.
We show there exists an efficient algorithm to construct rewards using a preference relation that satisfies Axioms 1--5 (See \autoref{alg:reward_design} in the Appendix).
Moreover, our Temporal $\gamma$-Indifference axiom provably generalizes two of their axioms.
We expand on this below.

%
\begin{axiom}[Memoryless; Shakerinava \& Ravanbakhsh, 2022]
\label{axiom:memoryless}
For all $t\in T$ and $A, B \in\Delta(\Hcal)$
\begin{align*}
    A \prefge B \iff t\cdot A \prefge t \cdot B.
\end{align*}
\end{axiom}
\begin{restatable}{theorem}{memorylesstheorem}\label{thm:memoryless}
A binary preference relation on $\Delta(\Hcal)$ satisfies Axioms 1-5 where $\gamma(t)$ is relaxed to be in $\mathbb{R}_{\geq0}$ if and only if Axioms 1-4 and the Memoryless axiom are satisfied.
\end{restatable}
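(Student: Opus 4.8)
The plan is to pass through the utility representation and reduce both axioms to statements about a single affine functional, at which point the uniqueness clause of \autoref{thm:vnm} does all the work. Since both sides of the claimed biconditional include the vNM rationality axioms (Axioms 1--4), \autoref{thm:vnm} supplies a utility $u\colon\Delta(\hists)\to\mathbb{R}$ with $A\prefge B \Leftrightarrow u(A)\ge u(B)$ that is affine under mixtures. First I would record the key translation: because $t\cdot(\sum_i p_i h_i)=\sum_i p_i\,(t\cdot h_i)$ and $u$ is affine, the map $v_t(A)\defeq u(t\cdot A)$ is itself an affine functional on $\Delta(\hists)$. Applying $u$ to both sides of \autoref{axiom:temporal_gamma_indifference} and clearing the strictly positive denominator $\gamma(t)+1$ shows that, given the representation, Temporal $\gamma$-Indifference is equivalent to the pointwise identity $u(t\cdot A)-u(t\cdot B)=\gamma(t)\bigl(u(A)-u(B)\bigr)$ for all $A,B$ and $t$; equivalently $v_t=\gamma(t)\,u+c_t$ for some constant $c_t$.

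For the direction Memoryless $\Rightarrow$ Axiom 5, I would observe that \autoref{axiom:memoryless} says exactly $u(A)\ge u(B)\Leftrightarrow v_t(A)\ge v_t(B)$, i.e.\ the affine functional $v_t$ represents the \emph{same} weak order as $u$. Hence $v_t$ satisfies both conditions of \autoref{thm:vnm} for the same preference relation, so by the uniqueness-up-to-positive-affine clause there are constants $a_t>0$ and $b_t$ with $v_t=a_t u+b_t$. Setting $\gamma(t)\defeq a_t$ recovers the identity above, so Axiom 5 holds with a $\gamma\colon T\to\mathbb{R}_{\ge 0}$ (in fact strictly positive). For the converse, starting from $u(t\cdot A)-u(t\cdot B)=\gamma(t)\,(u(A)-u(B))$ with $\gamma(t)>0$, dividing by $\gamma(t)$ preserves the sign of $u(A)-u(B)$, which is precisely the biconditional of Memoryless.

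The main obstacle — and the reason the range is $\mathbb{R}_{\ge 0}$ rather than $[0,1]$ — is the boundary value $\gamma(t)=0$ together with the non-triviality hypothesis hidden in vNM uniqueness. I would first isolate the degenerate case where all of $\Delta(\hists)$ is indifferent ($u$ constant): there both axioms hold vacuously and any $\gamma$ works, so it is set aside. In the non-degenerate case $u$ is non-constant and the uniqueness clause genuinely applies, which is exactly what forces the scale factor $a_t=\gamma(t)$ to be strictly positive in the Memoryless direction; conversely strict positivity is exactly what is needed to run the ``$\Leftarrow$'' half of the Memoryless biconditional, since a vanishing $\gamma(t)$ would collapse $v_t$ to a constant and destroy that half while still satisfying Axiom 5. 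So the delicate bookkeeping is to verify that Memoryless and a strictly positive $\gamma$ correspond, and that the constant-utility case is consistent with either formulation.
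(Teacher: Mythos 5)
Your proposal is correct, and its skeleton matches the paper's: both arguments pass through the vNM representation $u$ and reduce each side of the biconditional to the statement that $v_t(\cdot) \defeq u(t\cdot\,\cdot)$ is a positive-scale affine transformation of $u$. The difference is the tool used for that reduction. The paper proves a standalone lemma (Lemma~\ref{lem:uni_affine}, following Keeney--Raiffa): two utility functions are strategically equivalent iff one is a positive affine transformation of the other; Memoryless is then read as saying $v_t$ is strategically equivalent to $u$. You instead check that $v_t$ is affine (prepending $t$ commutes with mixtures) and that under Memoryless it represents the same weak order, so the uniqueness clause already stated in Theorem~\ref{thm:vnm} yields $v_t = a_t u + b_t$ with $a_t > 0$ directly. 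This is more economical, since it reuses the stated theorem rather than re-deriving its uniqueness half, at the cost of needing the (easy) observation that $v_t$ satisfies both hypotheses of Theorem~\ref{thm:vnm}. You are also more careful than the paper at the boundary: the paper's forward direction simply stipulates ``let $\gamma(t)>0$,'' whereas you note explicitly that if $\gamma(t)=0$ then Axiom~\ref{axiom:temporal_gamma_indifference} forces $t\cdot A \sim t\cdot B$ for all $A,B$, so the Memoryless biconditional fails for any non-degenerate preference even though Axiom 5 holds; hence the equivalence as literally stated (with $\gamma(t)\in\mathbb{R}_{\geq 0}$) really only goes through for strictly positive $\gamma$ or for the totally indifferent relation. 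Isolating that case, together with the constant-utility degeneracy, is a genuine improvement over the paper's own write-up.
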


%
\begin{axiom}[Additivity; Shakerinava \& Ravanbakhsh, 2022]
\label{axiom:additivity}
For all $h_1, h_2 \in \Hcal$, $A, B, C, D \in \Delta(\Hcal)$ and $p \in [0, 1]$, 
\begin{align*}
    & p(h_1 \cdot A) + (1-p) C \prefge p(h_1 \cdot B) + (1-p) D \\
\Leftrightarrow & p(h_2 \cdot A) + (1-p) C \prefge p(h_2 \cdot B) + (1-p) D.
\end{align*}
\end{axiom}
\begin{restatable}{theorem}{additivitytheorem}
A binary preference relation on $\Delta(\Hcal)$ satisfies Axioms 1-5 with $\gamma(t) = 1$ if and only if Axioms 1-4 are satisfied as well as Additivity.
\end{restatable}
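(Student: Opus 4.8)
The plan is to route everything through the von Neumann--Morgenstern representation (\autoref{thm:vnm}). Both characterizations in the statement include Axioms 1--4, so in either direction I may invoke \autoref{thm:vnm} to obtain an affine utility $u\colon\Delta(\Hcal)\to\mathbb{R}$ with $A\prefge B \Leftrightarrow u(A)\ge u(B)$ and $u(\sum_i p_i h_i)=\sum_i p_i u(h_i)$. The whole argument then reduces to showing that, given such a $u$, Temporal $\gamma$-Indifference with $\gamma\equiv 1$ is equivalent to Additivity. Written through $u$, \autoref{axiom:temporal_gamma_indifference} with $\gamma(t)=1$ becomes $\tfrac12 u(t\cdot A)+\tfrac12 u(B)=\tfrac12 u(t\cdot B)+\tfrac12 u(A)$, i.e. $u(t\cdot A)-u(A)=u(t\cdot B)-u(B)$ for all $A,B$ and transitions $t$; equivalently $u(t\cdot A)=u(A)+r(t)$, where $r(t)\defeq u(t\cdot A)-u(A)$ is well defined (independent of $A$). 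Call this property (T).

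For the direction (Axiom 5, $\gamma=1$) $\Rightarrow$ Additivity, I would first promote (T) from single transitions to arbitrary prefixes by induction on $|h|$: since $h\cdot A$ is obtained by prepending the transitions of $h$ one at a time, $u(h\cdot A)=u(A)+R(h)$ with $R(h)\defeq\sum_{t\in h} r(t)$ and $R(\emptytraj)=0$. Expanding the two sides of \autoref{axiom:additivity} with affinity, the preference $p(h_1\cdot A)+(1-p)C\prefge p(h_1\cdot B)+(1-p)D$ holds iff $p\,u(A)+pR(h_1)+(1-p)u(C)\ge p\,u(B)+pR(h_1)+(1-p)u(D)$, where the common term $pR(h_1)$ cancels. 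The resulting inequality no longer mentions $h_1$, so it is unchanged when $h_1$ is replaced by $h_2$, which is exactly Additivity.

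For the converse, Additivity $\Rightarrow$ (Axiom 5, $\gamma=1$), the key step is to show that the utility difference $u(h\cdot A)-u(h\cdot B)$ does not depend on the prefix $h$, and then to anchor it at the empty prefix. Fixing $h_1,h_2\in\Hcal$ and $A,B$, I would instantiate Additivity at $p=\tfrac12$, $C\defeq h_1\cdot B$, $D\defeq h_1\cdot A$. The left-hand comparison then reads $X\prefge X$ for $X=\tfrac12(h_1\cdot A)+\tfrac12(h_1\cdot B)$, which holds by reflexivity, so Additivity forces the right-hand comparison; in utility terms this gives $u(h_2\cdot A)-u(h_2\cdot B)\ge u(h_1\cdot A)-u(h_1\cdot B)$, and swapping $h_1$ and $h_2$ yields equality. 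Taking $h_2=\emptytraj$ (so $\emptytraj\cdot A=A$) and $h_1=t$ a single transition gives $u(t\cdot A)-u(t\cdot B)=u(A)-u(B)$, which rearranges to (T) and, read back through affinity, is precisely Temporal $\gamma$-Indifference with $\gamma(t)=1$.

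The main obstacle is this last anchoring step: Additivity only equates utility differences across nonempty prefixes, whereas Temporal $\gamma$-Indifference additionally compares prepending $t$ against not prepending at all. Indeed, without access to the empty prefix $\emptytraj$ the two axioms are genuinely inequivalent---one can build an affine $u$ that is additive on all prefixes of length $\ge 1$ yet assigns an anomalous value to length-one histories, satisfying Additivity while violating (T). The equivalence therefore hinges on $\emptytraj\in\Hcal$ (equivalently, on the convention $u(\emptytraj)\defeq 0$ used in \autoref{thm:markov-reward}), which supplies the missing anchor $h_2=\emptytraj$; I would make this role explicit in the write-up.
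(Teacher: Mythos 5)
Your proof is correct, and it takes a genuinely different route from the paper's. The paper's forward direction never passes through an explicit additive formula for $u$: it mixes the two sides of \autoref{axiom:additivity} with an auxiliary distribution $(t'\cdot X)$ via \autoref{axiom:independence}, invokes a swap lemma (Lemma~\ref{lem:equiv}, itself derived from Temporal $1$-Indifference) to exchange $t$ and $t'$ inside half-half mixtures, then strips the mixture off again with Independence; notably it only treats single-transition prefixes $t,t'$, whereas your induction $u(h\cdot A)=u(A)+R(h)$ handles arbitrary prefixes $h_1,h_2\in\Hcal$ as the axiom actually demands. For the converse, the paper specializes Additivity to $t'=\emptytraj$ and $C=D$ to recover the Memoryless axiom and then cites \autoref{thm:memoryless}; but that theorem only returns Temporal $\gamma$-Indifference for \emph{some} $\gamma(t)\in\mathbb{R}_{\geq 0}$, so as written it does not pin down $\gamma(t)=1$ (indeed a utility with $u(t\cdot A)=r(t)+\tfrac{1}{2}u(A)$ satisfies Memoryless yet violates Additivity, so that reduction genuinely discards information). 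Your instantiation $C=h_1\cdot B$, $D=h_1\cdot A$, $p=\tfrac{1}{2}$ closes exactly this gap: it uses the full strength of Additivity to force $u(h_1\cdot A)-u(h_1\cdot B)=u(h_2\cdot A)-u(h_2\cdot B)$ for all prefixes and, anchored at $h_2=\emptytraj$, yields $\gamma(t)=1$ directly, which is what the theorem claims. Your closing caveat is also well taken and applies to the paper equally: both arguments need the empty history to be an admissible prefix in \autoref{axiom:additivity} (the paper's $t'=\emptytraj$ step uses it too), even though the main text defines $\hists=\bigcup_{n=1}^{\infty}\hists_n$; your counterexample showing the equivalence fails without this anchor is a genuine sharpening worth recording.
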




%
Separately, \citet{sunehag2011axioms,sunehag2015rationality} study what constitutes a \textit{rational reinforcement learning agent}. More concretely, in both works, Sunehag and Hutter suppose the environment and reward function are defined, and set out to characterize what kinds of agents might we consider rational.
%
In their first work \citep{sunehag2011axioms}, they provide a series of properties that characterize what it means for an RL agent to be rational. These properties bear some similarity to the vNM axioms, but are agent-side properties rather than implicit requirements on the structure of goals or rewards themselves.
%
In follow up work \citet{sunehag2015rationality} focus specifically on the rationality of optimistic agents. That is, given a reward function and environment, they contrast optimistic agents with those that are strictly expected utility maximizers. They give a full characterization of rational agency that justifies the use of optimism for exploration, showing that expected utility maximization on its own can be strictly worse than optimistic behavior.

\subsection{The Limited Expressivity of Markov Reward}

%
\begin{figure*}[!t]
    \centering
    \subfigure[\label{subfig:steady_state}Steady State Case]{\includegraphics[width=0.4\textwidth]{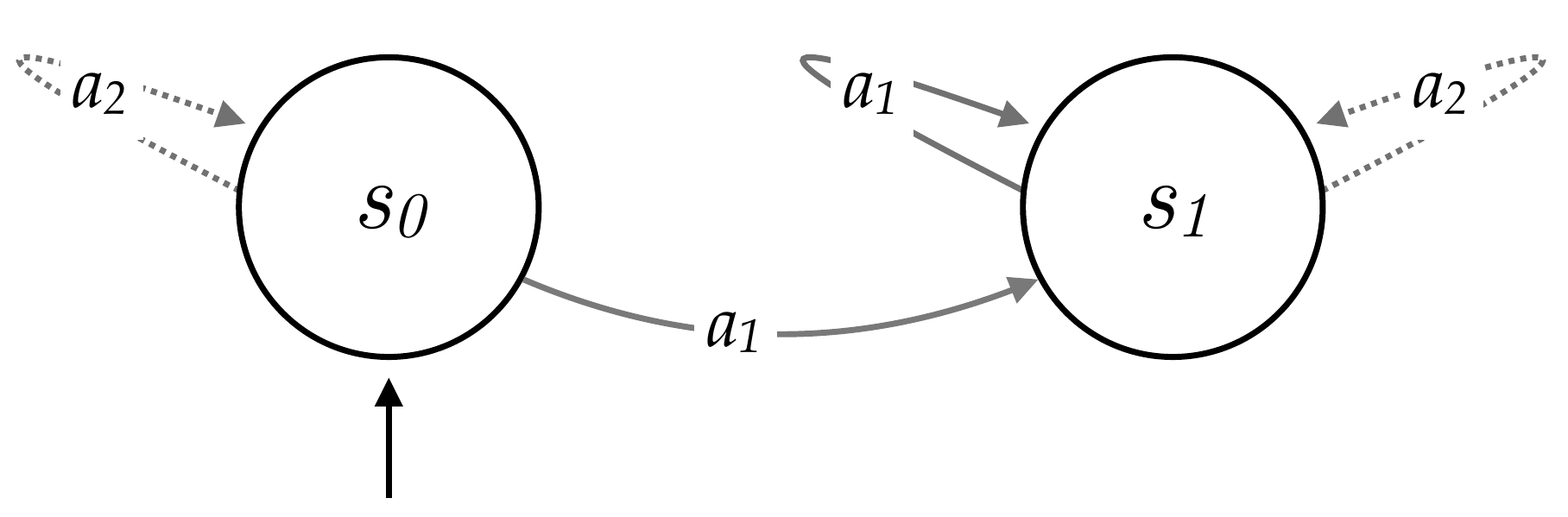}} \hspace{5mm}
    \subfigure[Entailment Case\label{subfig:entailment}]{\includegraphics[width=0.4\textwidth]{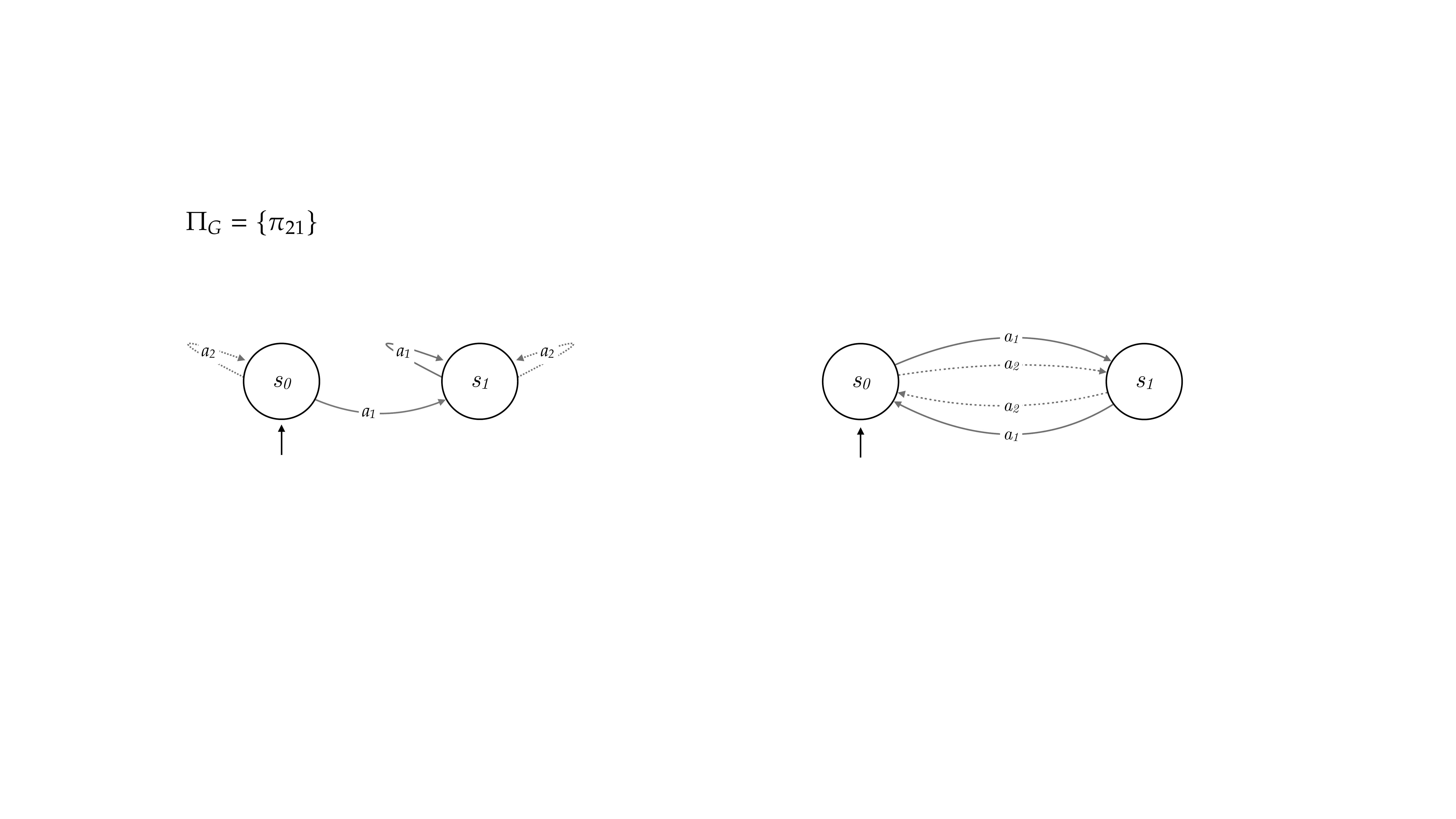}}
    
    \caption{The two counterexamples from \citet{abel2021reward}. In the steady state case, the set of acceptable policies contains only the policy that executes $a_2$ in the left, and $a_1$ in the right. In the entailment case, the two acceptable policies are those that choose a \textit{different} action across the two states.}
    \label{fig:example_inexpr_soap}
\end{figure*}

%
\citet{abel2021reward} study the expressivity of reward in Markovian environments. In particular, they suppose the environment is characterized by a finite state space and transition function, and assume that preferences over objects defined with respect to the environment are given. These preferences come in three forms: (1) A set of acceptable policies, (2) A partial ordering on policies, or (3) A partial ordering on fixed-length state-action trajectories. Each of these preference types are defined with respect to the environment's \textit{state space}, $\S$, that is known to be sufficient to support a Markovian transition function (and thus, $\env : \S \times \A \rightarrow \Delta(\S)$). For example, in the case of (1), the policy space is defined as all deterministic mappings of the form $\pi : \S \rightarrow \A$. Then, a choice of the first preference type is just a selection of acceptable policies. Under these three types, Abel et al. show that there are restrictions on what kinds of preferences can be codified in terms of a reward function that is Markov with respect to the same state space. Specifically, they point out two styles of counterexample: (1) the \textit{steady state type}, in which preferences have bearing on impossible outcomes, and (2) the \textit{entailment type}, in which the desirability of an action choice depends on behavior elsewhere in the environment.

%
We next show how the two styles of counterexample from Abel et al. play out in the context of our results. We find that the steady state type violates one of our assumptions, and the entailment type violates an axiom. 

%
\textbf{Steady State.  } First, recall the steady state counterexample, pictured in \autoref{subfig:steady_state}. 
The given preference over policies asserts that the only acceptable policy chooses $a_2$ in the left state, but $a_1$ in the right. This is a counterexample in the sense that there is no Markov reward function that ensures the policy $\pi_{22} : \{s_0 \mapsto a_2 \mid s_1 \mapsto a_2\}$, has higher value than $\pi_{21} : \{s_0 \mapsto a_2 \mid s_1 \mapsto a_1\}$, since both policies never reach state $s_1$.

Under our formalism, these policies would induce equivalent outcome distributions and thus be interchangeably preferred. 
We begin with preferences over outcomes of the kind suggested by \autoref{assumption:subjective-goals}.
That is, in the two-state environment described, our preferences are over sequences of state-action pairs. Recall that under \autoref{assumption:policy-prefs}, we understand a preference over policies $\pi_1 \prefge_g \pi_2$ to mean that there exists a time after which we always prefer the distribution over outcomes produced by $\pi_1$ to $\pi_2$.
Thus, when we consider the steady state counter example, we can see that \autoref{assumption:policy-prefs} is violated: We prefer $\pi_1$ to $\pi_2$ even though they induce \textit{identical} distributions $D_n^{\pi_1}$ and $D_n^{\pi_2}$. In this sense, the counterexample is an instance of a broader principle we have codified in our formalism.

\textbf{Entailment.  }
The second kind of counterexample deals with cases in which the value of one policy necessarily entails something about the value of another policy. For instance, in the example pictured in \autoref{subfig:entailment}, the two acceptable policies are those that take a different action in each of the two states. So, $\pi_{12} : \{s_0 \mapsto a_1 \mid s_1 \mapsto a_2\}$ and $\pi_{21} : \{s_0 \mapsto a_2 \mid s_1 \mapsto a_1\}$ are both desirable because they each take opposing actions across the two states. In follow up work, \citet{abelexpressing} demonstrate that a simple state-construction procedure can resolve the counterexample. In our framing, where precisely do the preferences of this entailment type go wrong? Such preferences directly violate \autoref{axiom:temporal_gamma_indifference}, the new axiom. To see why, consider the two (Dirac) distributions: $A = s_2,a_2$ and $B = s_2,a_1$. Let $t = s_1,a_1$. That is, the composite distributions formed are $t \cdot A = s_1,a_1,s_2,a_2$ and $t\cdot B = s_1,a_1, s_2,a_1$. However, there is no choice of $\gamma(t)$ for which the indifference expressed by the Axiom holds, as the preference requires that $A \prefge B$.

%



\section{Challenges to the Reward Hypothesis}

We next summarize common challenges to the reward hypothesis and consider whether our formalization of the hypothesis provides any further insight into these arguments.

\subsection{Human Irrationality}
Extensive work by Kahneman \& Tversky and Johnson-Laird showed how human behavior deviates from the rational model proposed by von Neumann and Morgenstern \citep{kahneman1982judgment, kahneman1982psychology, tversky1983extensional,johnson1983mental}.
Based on these, one might conclude that our description of goals and purposes does not apply to humans, and is therefore incomplete. 
However, the expression of goals is distinct from the behaviors that emerge in their pursuit.
Our results prove the existence of a Markov reward signal under the presumption that all goals and purposes can be rationally expressed.

\begin{figure*}[!t]
    \centering
    \includegraphics[]{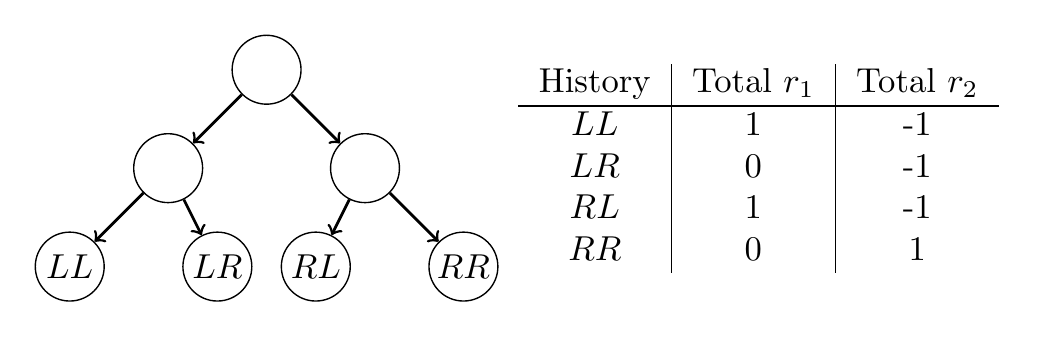}
    \caption{An environment used to illustrate how Constrained MDPs can violate independence and continuity.
    Two actions, $L$ and $R$, produce four histories shown in the table above.
    Also shown are the cumulative rewards of each history.
    }
    \label{fig:cmdp_counter_example}
\end{figure*}

\subsection{Multiple Objectives}

Another natural reaction to the reward hypothesis is to suspect that collapsing all of the nuance that might go into ``purpose'' down to a single scalar seems difficult, if not impossible.
Suppose we are interested in designing an autonomous taxi to take passengers between the airport and the university. We would like the taxi to balance between safety, timeliness, and energy use. But how precisely do we make these trade-offs, and can we really reduce the nuances of their trade-offs down to a single scalar? 

This challenge often yields a variant called multi-objective or multi-criteria decision making that has been studied extensively in the literature. \citet{hausner1953multidimensional} drops continuity (\autoref{axiom:continuity}) in order to generalize the vNM results to the multi-dimensional setting. \citet{gabor1998multi} propose multi-criteria RL in MDPs and establish initial conditions for importing classical results from scalar-valued rewards such as the existence of stationary policies and the Bellman equation. More recently, \citet{miura2022oteomdr} explicitly focus on the expressivity of multi-dimensional reward, enriching the result's of \citet{abel2021reward}. In particular, Miura shows that multi-dimensional Markov reward functions are strictly more expressive than scalar Markov reward functions in MDPs. However, we note that this expressivity comes with the cost of violating at least one of the axioms---we describe this in more detail shortly. \citet{pitisrational2022} make a similar argument, and prove that some multi-objective problems cannot be collapsed to a scalar objective.


\textbf{Constrained MDPs.  }
Constrained MDPs have been viewed as a challenge to the reward hypothesis \citep{szepesvariwebRLhypothesis}.
In a Constrained MDP, the goal is to maximize the expected sum of a base scalar reward, subject to additional constraints on the expected sums of other independent reward functions.
\citet{szepesvariwebRLhypothesis} showed that it is generally not possible to solve these objectives as typical MDPs, with a process of ``scalarization''.
One can show that Constrained MDPs do not always respect our notion of ``goals and purposes''. 

Consider the example pictured in Figure \ref{fig:cmdp_counter_example}.
The environment contains two actions, $L$ and $R$, whose combinations produce four different histories, denoted $LL,LR,RL,RR$.
The cumulative payoffs under the base reward $r_1$ and secondary reward $r_2$ are also shown.
Suppose the constrained objective involves maximizing expected utility under $r_1$ while demanding expected utility under $r_2$ be non-negative.
Under this objective, distributions of feasible histories are preferred to those which are infeasible.
Consider two such distributions $A= \frac{1}{2} RL + \frac{1}{2} RR$ and $B=LL$.
We have $A \prefge B$, since the first distribution is feasible and the second is not. 
Independence (\autoref{axiom:independence}) asserts that the preference remains unchanged when $A$ and $B$ are equally-mixed with any other distribution.
However, in this example, the preference reverses when mixed with $C=RR$.
That is $\frac{1}{2}A + \frac{1}{2}C \prefle \frac{1}{2} B + \frac{1}{2} C$. 
In this case, both distributions are feasible, but the first achieves less expected utility under the base reward $r_1$.

A similar example can be used for continuity (\autoref{axiom:continuity}). 
This time let $A = \frac{1}{2}RL + \frac{1}{2}RR$, $B=\frac{1}{2}LR + \frac{1}{2}RR$, and $C=RR$.
We have $A\prefge B \prefge C$, because $A$ is feasible with the highest expected base utility, $B$ is feasible but achieves less utility than $A$, and $C$ is infeasible.
For this selection, there is no break even point $p\in(0,1)$ that would make an agent indifferent between $A$ and the mixture $pB+(1-p)C$;
the latter is always infeasible.



\textbf{Risk.  }
Risk-sensitive goals provide another challenge to the reward hypothesis.
Risk-sensitive objectives applied to the returns generated by a policy can, in general, cause the optimal policy to be non-Markovian \citep{bdr2023}. This can be readily seen in the case of maximizing a variance-penalized mean return, where the agent first experiences some uncontrolled randomness that produces two different rewards (e.g. $0$ or $1$) and then faces the choice of two actions with different rewards (again, $0$ or $1$) allowing it to choose between maximizing value or reducing variance. For sufficiently large variance penalties the optimal policy will be to choose the action leading to the opposite reward of what it had previously experienced. However, this is not possible for Markov policies. 

Observe that when all optimal policies are non-Markov, such as in this example, there does not exist any Markov reward function with the same optimal policy under a risk-neutral objective. This is because the optimal policy will necessarily be Markovian. As a result, we can conclude that at least one of our assumptions or axioms must have been violated. In the above example, the issue is a violation of Axiom~\ref{axiom:temporal_gamma_indifference} and can be overcome by augmenting the state using the objective goals formulation.

\section{Conclusion}

%
We have here provided a conclusive account summarizing the implicit conditions required for the reward hypothesis. We separate such conditions into \textit{assumptions}, that center around interpretations of the hypothesis, and \textit{axioms}, that express specific formal properties about rational preference relations of all that one could mean by goals and purposes. Our main result (\autoref{thm:markov-reward}) states that, under Assumptions 1-4, the reward hypothesis for Markov reward functions holds if and only if Axioms 1-5 are satisfied. This result completely specifies the requirements on preferences under which the hypothesis holds. We further explore consequences of our new framing and results, including a variant from the viewpoint of the designer, an efficient constructive algorithm that translates rational preferences into a reward function, and discussion of how this axiomatic perspective can sharpen our understanding of alternative RL objectives such as constrained MDPs and risk.


\section{Acknowledgements}
The authors gratefully acknowledge how the interactions and feedback of their colleagues have shaped this paper.
In particular, we would like to thank Andr\`{e} Barreto and Doina Precup for their thoughtful comments on an early draft of the paper, Bernardo {\' A}vila Pires for helpful conversations, and Niko Yasui for a thorough review of the text and algorithm, including improvements to the constructive algorithm.
Any remaining errors are those of the authors.
The work presented here was in part supported by NSERC, CIFAR, and Amii.

\bibliography{main}

\begin{thebibliography}{50}
\providecommand{\natexlab}[1]{#1}
\providecommand{\url}[1]{\texttt{#1}}
\expandafter\ifx\csname urlstyle\endcsname\relax
  \providecommand{\doi}[1]{doi: #1}\else
  \providecommand{\doi}{doi: \begingroup \urlstyle{rm}\Url}\fi

\bibitem[Abel et~al.(2021)Abel, Dabney, Harutyunyan, Ho, Littman, Precup, and
  Singh]{abel2021reward}
Abel, D., Dabney, W., Harutyunyan, A., Ho, M.~K., Littman, M.~L., Precup, D.,
  and Singh, S.
\newblock On the expressivity of {M}arkov reward.
\newblock In \emph{Advances in Neural Information Processing Systems}, 2021.

\bibitem[Abel et~al.(2022)Abel, Barreto, Bowling, Dabney, Hansen, Harutyunyan,
  Ho, Kumar, Littman, Precup, and Satinder]{abelexpressing}
Abel, D., Barreto, A., Bowling, M., Dabney, W., Hansen, S., Harutyunyan, A.,
  Ho, M.~K., Kumar, R., Littman, M.~L., Precup, D., and Satinder, S.
\newblock Expressing non-{M}arkov reward to a {M}arkov agent.
\newblock \emph{Multidisciplinary Conference on Reinforcement Learning and
  Decision Making}, 2022.

\bibitem[Aumann(1962)]{aumann1962utility}
Aumann, R.~J.
\newblock Utility theory without the completeness axiom.
\newblock \emph{Econometrica: Journal of the Econometric Society}, pp.\
  445--462, 1962.

\bibitem[Bellemare et~al.(2023)Bellemare, Dabney, and Rowland]{bdr2023}
Bellemare, M.~G., Dabney, W., and Rowland, M.
\newblock \emph{Distributional Reinforcement Learning}.
\newblock MIT Press, 2023.
\newblock \url{http://www.distributional-rl.org}.

\bibitem[Bernoulli(1738)]{bernoulli1738exposition}
Bernoulli, D.
\newblock Specimen theoriae novae de mensura sortis (trans. in 1954 as
  exposition of a new theory on the measurement of risk).
\newblock \emph{Econometrica}, 22\penalty0 (1):\penalty0 23--36, 1738.

\bibitem[Cassandra et~al.(1994)Cassandra, Kaelbling, and
  Littman]{cassandra1994acting}
Cassandra, A.~R., Kaelbling, L.~P., and Littman, M.~L.
\newblock Acting optimally in partially observable stochastic domains.
\newblock In \emph{Proceedings of the AAAI Conference on Artificiall
  Intelligence}, 1994.

\bibitem[Chang(2015)]{chang2015value}
Chang, R.
\newblock Value incomparability and incommensurability.
\newblock \emph{The {O}xford handbook of value theory}, pp.\  205--224, 2015.

\bibitem[Dong et~al.(2021)Dong, Van~Roy, and Zhou]{dong2021simple}
Dong, S., Van~Roy, B., and Zhou, Z.
\newblock Simple agent, complex environment: Efficient reinforcement learning
  with agent states.
\newblock \emph{arXiv preprint arXiv:2102.05261}, 2021.

\bibitem[Fedus et~al.(2019)Fedus, Gelada, Bengio, Bellemare, and
  Larochelle]{fedus2019hyperbolic}
Fedus, W., Gelada, C., Bengio, Y., Bellemare, M.~G., and Larochelle, H.
\newblock Hyperbolic discounting and learning over multiple horizons.
\newblock \emph{arXiv preprint arXiv:1902.06865}, 2019.

\bibitem[G{\'a}bor et~al.(1998)G{\'a}bor, Kalm{\'a}r, and
  Szepesv{\'a}ri]{gabor1998multi}
G{\'a}bor, Z., Kalm{\'a}r, Z., and Szepesv{\'a}ri, C.
\newblock Multi-criteria reinforcement learning.
\newblock In \emph{Proceedings of the International Conference on Machine
  Learning}, volume~98, 1998.

\bibitem[Hausner(1953)]{hausner1953multidimensional}
Hausner, M.
\newblock Multidimensional utilities (rev).
\newblock Technical report, RAND CORP SANTA MONICA CA, 1953.

\bibitem[Johnson-Laird(1983)]{johnson1983mental}
Johnson-Laird, P.~N.
\newblock \emph{Mental models: Towards a cognitive science of language,
  inference, and consciousness}.
\newblock Harvard University Press, 1983.

\bibitem[Kahneman \& Tversky(1982)Kahneman and Tversky]{kahneman1982psychology}
Kahneman, D. and Tversky, A.
\newblock The psychology of preferences.
\newblock \emph{Scientific American}, 246\penalty0 (1):\penalty0 160--173,
  1982.

\bibitem[Kahneman et~al.(1982)Kahneman, Slovic, Slovic, and
  Tversky]{kahneman1982judgment}
Kahneman, D., Slovic, S.~P., Slovic, P., and Tversky, A.
\newblock \emph{Judgment under uncertainty: Heuristics and biases}.
\newblock Cambridge university press, 1982.

\bibitem[Keeney et~al.(1993)Keeney, Raiffa, and Meyer]{keeney1993decisions}
Keeney, R.~L., Raiffa, H., and Meyer, R.~F.
\newblock \emph{Decisions with multiple objectives: preferences and value
  trade-offs}.
\newblock Cambridge university press, 1993.

\bibitem[Koopmans(1960)]{koopmans1960stationary}
Koopmans, T.~C.
\newblock Stationary ordinal utility and impatience.
\newblock \emph{Econometrica: Journal of the Econometric Society}, pp.\
  287--309, 1960.

\bibitem[Koopmans et~al.(1964)Koopmans, Diamond, and
  Williamson]{koopmans1964stationary}
Koopmans, T.~C., Diamond, P.~A., and Williamson, R.~E.
\newblock Stationary utility and time perspective.
\newblock \emph{Econometrica: Journal of the Econometric Society}, pp.\
  82--100, 1964.

\bibitem[Kreps \& Porteus(1978)Kreps and Porteus]{kreps1978temporal}
Kreps, D.~M. and Porteus, E.~L.
\newblock Temporal resolution of uncertainty and dynamic choice theory.
\newblock \emph{Econometrica: journal of the Econometric Society}, pp.\
  185--200, 1978.

\bibitem[Lattimore(2014)]{lattimore2014theory}
Lattimore, T.
\newblock \emph{Theory of General Reinforcement Learning}.
\newblock PhD thesis, The Australian National University, 2014.

\bibitem[Lattimore et~al.(2013)Lattimore, Hutter, and
  Sunehag]{lattimore2013sample}
Lattimore, T., Hutter, M., and Sunehag, P.
\newblock The sample-complexity of general reinforcement learning.
\newblock In \emph{Proceedings of the International Conference on Machine
  Learning}, 2013.

\bibitem[Leike(2016)]{leike2016nonparametric}
Leike, J.
\newblock \emph{Nonparametric general reinforcement learning}.
\newblock PhD thesis, The Australian National University, 2016.

\bibitem[Leike et~al.(2018)Leike, Krueger, Everitt, Martic, Maini, and
  Legg]{leike2018scalable}
Leike, J., Krueger, D., Everitt, T., Martic, M., Maini, V., and Legg, S.
\newblock Scalable agent alignment via reward modeling: a research direction.
\newblock \emph{arXiv preprint arXiv:1811.07871}, 2018.

\bibitem[Lewis(1985)]{lewis1985effectively}
Lewis, A.~A.
\newblock On effectively computable realizations of choice functions: Dedicated
  to professors kenneth j. arrow and anil nerode.
\newblock \emph{Mathematical Social Sciences}, 10\penalty0 (1):\penalty0
  43--80, 1985.

\bibitem[Littman(2017)]{littmanwebRLhypothesis}
Littman, M.
\newblock The reward hypothesis.
\newblock \url{https://tinyurl.com/4z52r3fe}, 2017.

\bibitem[Lu et~al.(2021)Lu, Van~Roy, Dwaracherla, Ibrahimi, Osband, and
  Wen]{lu2021reinforcement}
Lu, X., Van~Roy, B., Dwaracherla, V., Ibrahimi, M., Osband, I., and Wen, Z.
\newblock Reinforcement learning, bit by bit.
\newblock \emph{arXiv preprint arXiv:2103.04047}, 2021.

\bibitem[Machina(1990)]{machina1990expected}
Machina, M.~J.
\newblock Expected utility hypothesis.
\newblock In \emph{Utility and probability}, pp.\  79--95. Springer, 1990.

\bibitem[Mahadevan(1996)]{mahadevan1996average}
Mahadevan, S.
\newblock Average reward reinforcement learning: Foundations, algorithms, and
  empirical results.
\newblock \emph{Machine learning}, 22\penalty0 (1):\penalty0 159--195, 1996.

\bibitem[Majeed(2021)]{majeed2021abstractions}
Majeed, S.~J.
\newblock \emph{Abstractions of General Reinforcement Learning}.
\newblock PhD thesis, The Australian National University, 2021.

\bibitem[Martin(2011)]{martin2011st}
Martin, R.
\newblock The st. petersburg paradox.
\newblock \emph{Stanford Encyclopedia of Philosophy}, 2011.

\bibitem[McCarthy(1998)]{mccarthy1998artificial}
McCarthy, J.
\newblock What is artificial intelligence.
\newblock \emph{URL: http://www-formal. stanford. edu/jmc/whatisai. html},
  1998.

\bibitem[Miura(2022)]{miura2022oteomdr}
Miura, S.
\newblock On the expressivity of multidimensional {M}arkov reward.
\newblock In \emph{In RLDM Workshop on Reinforcement Learning as a Model of
  Agency}, 2022.

\bibitem[Ng et~al.(2000)Ng, Russell, et~al.]{ng2000algorithms}
Ng, A.~Y., Russell, S., et~al.
\newblock Algorithms for inverse reinforcement learning.
\newblock In \emph{Proceedings of the International Conference on Machine
  Learning}, pp.\  663--670, 2000.

\bibitem[Pitis(2019)]{pitis2019rethinking}
Pitis, S.
\newblock Rethinking the discount factor in reinforcement learning: A decision
  theoretic approach.
\newblock In \emph{Proceedings of the AAAI Conference on Artificial
  Intelligence}, 2019.

\bibitem[Pitis et~al.(2022)Pitis, Bailey, and Ba]{pitisrational2022}
Pitis, S., Bailey, D., and Ba, J.
\newblock Rational multi-objective agents must admit non-markov reward
  representations.
\newblock \emph{NeurIPS Workshop on Machine Learning Safety}, 2022.

\bibitem[Ramsey(1926)]{ramsey2016truth}
Ramsey, F.~P.
\newblock Truth and probability.
\newblock In \emph{Readings in formal epistemology}, pp.\  21--45. Springer,
  1926.

\bibitem[Richter \& Wong(1999)Richter and Wong]{richter1999computable}
Richter, M.~K. and Wong, K.-C.
\newblock Computable preference and utility.
\newblock \emph{Journal of Mathematical Economics}, 32\penalty0 (3):\penalty0
  339--354, 1999.

\bibitem[Rustem \& Velupillai(1990)Rustem and
  Velupillai]{rustem1990rationality}
Rustem, B. and Velupillai, K.
\newblock Rationality, computability, and complexity.
\newblock \emph{Journal of Economic Dynamics and Control}, 14\penalty0
  (2):\penalty0 419--432, 1990.

\bibitem[Shakerinava \& Ravanbakhsh(2022)Shakerinava and
  Ravanbakhsh]{shakerinava2022utility}
Shakerinava, M. and Ravanbakhsh, S.
\newblock Utility theory for sequential decision making.
\newblock In \emph{Proceedings of the International Conference on Machine
  Learning}, 2022.

\bibitem[Silver et~al.(2021)Silver, Singh, Precup, and
  Sutton]{silver2021reward}
Silver, D., Singh, S., Precup, D., and Sutton, R.~S.
\newblock Reward is enough.
\newblock \emph{Artificial Intelligence}, 299:\penalty0 103535, 2021.

\bibitem[Singh et~al.(2009)Singh, Lewis, and Barto]{singh2009rewards}
Singh, S., Lewis, R.~L., and Barto, A.~G.
\newblock Where do rewards come from?
\newblock In \emph{Proceedings of the Annual Conference of the Cognitive
  Science Society}, 2009.

\bibitem[Sobel(1975)]{sobel1975ordinal}
Sobel, M.~J.
\newblock Ordinal dynamic programming.
\newblock \emph{Management science}, 21\penalty0 (9):\penalty0 967--975, 1975.

\bibitem[Sunehag \& Hutter(2011)Sunehag and Hutter]{sunehag2011axioms}
Sunehag, P. and Hutter, M.
\newblock Axioms for rational reinforcement learning.
\newblock In \emph{International Conference on Algorithmic Learning Theory},
  pp.\  338--352. Springer, 2011.

\bibitem[Sunehag \& Hutter(2015)Sunehag and Hutter]{sunehag2015rationality}
Sunehag, P. and Hutter, M.
\newblock Rationality, optimism and guarantees in general reinforcement
  learning.
\newblock \emph{The Journal of Machine Learning Research}, 16\penalty0
  (1):\penalty0 1345--1390, 2015.

\bibitem[Sutton(2004)]{suttonwebRLhypothesis}
Sutton, R.~S.
\newblock The reward hypothesis.
\newblock
  \url{http://incompleteideas.net/rlai.cs.ualberta.ca/RLAI/rewardhypothesis.html},
  2004.

\bibitem[Sutton \& Barto(2018)Sutton and Barto]{sutton2018reinforcement}
Sutton, R.~S. and Barto, A.~G.
\newblock \emph{Reinforcement learning: An introduction}.
\newblock MIT press, 2018.

\bibitem[Szepesv{\' a}ri(2020)]{szepesvariwebRLhypothesis}
Szepesv{\' a}ri, C.
\newblock Constrained {MDP}s and the reward hypothesis.
\newblock
  \url{https://readingsml.blogspot.com/2020/03/constrained-mdps-and-reward-hypothesis.html},
  2020.

\bibitem[Tversky \& Kahneman(1983)Tversky and Kahneman]{tversky1983extensional}
Tversky, A. and Kahneman, D.
\newblock Extensional versus intuitive reasoning: The conjunction fallacy in
  probability judgment.
\newblock \emph{Psychological review}, 90\penalty0 (4):\penalty0 293, 1983.

\bibitem[von Neumann \& Morgenstern(1953)von Neumann and
  Morgenstern]{vonneumann1953theory}
von Neumann, J. and Morgenstern, O.
\newblock \emph{Theory of Games and Economic Behavior}.
\newblock Princeton University Press, third edition, 1953.

\bibitem[White(2017)]{white17}
White, M.
\newblock Unifying task specification in reinforcement learning.
\newblock In \emph{Proceedings of the Thirty-fourth International Conference on
  Machine Learning}, 2017.

\bibitem[Wirth et~al.(2017)Wirth, Akrour, Neumann, F{\"u}rnkranz,
  et~al.]{wirth2017survey}
Wirth, C., Akrour, R., Neumann, G., F{\"u}rnkranz, J., et~al.
\newblock A survey of preference-based reinforcement learning methods.
\newblock \emph{Journal of Machine Learning Research}, 18\penalty0
  (136):\penalty0 1--46, 2017.

\end{thebibliography}
\bibliographystyle{icml2023}

\appendix
\onecolumn

\section{Proof of Main Theorem}

\markovrewardtheorem*
%
\begin{proof}
We first show the axioms imply the existence of a Markov reward.  By Theorem~\ref{thm:vnm} and axioms 1-4, we can select $u : \Delta(\hists) \rightarrow \mathbb{R}$, such that $u(\emptytraj) = 0$, leaving $u$ unique up to a positive scale factor.  Define $r(t) \defeq u(t)$.  From Axiom 5 and choosing $h_1 = h$ and $h_2 = \emptytraj$, 
\begin{align*}
    \frac{1}{\gamma(t) + 1}(t \cdot h) + \frac{\gamma(t)}{\gamma(t) + 1} \emptytraj \sim
    \frac{1}{\gamma(t) + 1}(t \cdot \epsilon) + \frac{\gamma(t)}{\gamma(t) + 1} h 
\end{align*}
Applying Theorem~\ref{thm:vnm} (first consequence 1 then consequence 2) we get,
\begin{align*}
    \frac{1}{\gamma(t) + 1}u(t \cdot h) + \frac{\gamma(t)}{\gamma(t) + 1} u(\emptytraj) =  
    \frac{1}{\gamma(t) + 1}u(t) + \frac{\gamma(t)}{\gamma(t) + 1} u(h) 
\end{align*}
Multiplying by $\gamma(t)+1$, we get, 
\begin{align*}
    u(t \cdot h) + \gamma(t) u(\emptytraj) &= u(t) + \gamma(t) u(h),\\
    u(t \cdot h) &= r(t) + \gamma(t) u(h)
\end{align*}

We now show that any Markov reward satisfies the axioms.  Due to Theorem~\ref{thm:vnm} we know Axioms 1-4 are satisfied.  We also know, for all $h \in \hists$,
\[
    u(t \cdot h) = r(t) + \gamma(t) u(h)
\]
so,
\[
    u(t \cdot h) -  \gamma(t) u(h) = r(t).
\]
Hence for all $h_1, h_2 \in \hists$
\[
    u(t \cdot h_1) -  \gamma(t) u(h_1) = r(t) = u(t \cdot h_2) -  \gamma(t) u(h_2).
\]
Rearranging we get,
\[
    u(t \cdot h_1) +  \gamma(t) u(h_2) = u(t \cdot h_2) + \gamma(t) u(h_1).
\]
Dividing all terms by $\gamma(t) + 1$,
\begin{align*}
    \frac{1}{\gamma(t)+1} u(t \cdot h_1) +  \frac{\gamma(t)}{\gamma(t)+1} u(h_2) =
    \frac{1}{\gamma(t)+1} u(t \cdot h_2) + \frac{\gamma(t)}{\gamma(t)+1} u(h_1).
\end{align*}
Applying Theorem~\ref{thm:vnm} (first consequence 2 then consequence 1),
\begin{align*}
\frac{1}{\gamma(t) + 1}(t \cdot h_1) + \frac{\gamma(t)}{\gamma(t) + 1}h_2 \sim
\frac{1}{\gamma(t) + 1}(t \cdot h_2) + \frac{\gamma(t)}{\gamma(t) + 1}h_1
\end{align*}
thus Axiom 5 is satisfied.
\end{proof}

\section{Proofs of Relationship to \cite{shakerinava2022utility}}

For the following proofs let $u_t(x) \defeq u(t \cdot x)$ for all $t \in \Ocal\times \Acal$.

\memorylesstheorem*

\begin{proof}
	Starting from the Temporal $\gamma$-Indifference Axiom, we use the vNM theorem under a specific choice of history to reduce the utility function to a positive affine form. 

	$\implies$ Take some $t$ from $\Ocal\times \Acal$, and let $\gamma(t)>0$.
	Temporal $\gamma$-Indifference states that for any distributions $A$ and $B$ from $\Delta(\Hcal)$
	\begin{align*}
				\frac{1}{1+\gamma(t)}\ (t\cdot A) + \frac{\gamma(t)}{1+\gamma(t)}\ B 
				\sim
		\frac{1}{1+\gamma(t)}\ (t\cdot B) + \frac{\gamma(t)}{1+\gamma(t)}\ A .
	\end{align*}
	The vNM utility theorem guarantees this indifference has a utility representation:
	\begin{align*}
		\frac{1}{1+\gamma(t)}u(t \cdot A) + \frac{\gamma(t)}{1+\gamma(t)}u(B) 
		&=
		\frac{1}{1+\gamma(t)}u(t \cdot B) + \frac{\gamma(t)}{1+\gamma(t)}u(A),\\
		u(t \cdot A) + \gamma(t)u(B) &= u(t \cdot B) + \gamma(t)u(A).
	\end{align*}
	Let $B=\epsilon$, and define $u(\epsilon)\triangleq 0$, $r(t)\triangleq u(t\cdot \epsilon)$ so we obtain  
	\begin{align*}
		u(t \cdot A) &= r(t) + \gamma(t)u(A).
	\end{align*}

    Using Lemma \ref{lem:uni_affine} we get that $u_t$ is strategically equivalent to $u$, where $a=r(t)$ and $b=\gamma(t)$.  
	Hence, for all distributions $A,B\in\Delta(\Hcal)$ 
	\begin{align*}
	    u(A) \geq u(B) \iff u_t(A) \geq u_t(B),
	\end{align*}
	therefore,
	\begin{align*}
		A \prefge B \iff (t \cdot A) \prefge (t \cdot B).
	\end{align*}

	$\impliedby$ The Memoryless Axiom states that for all $t\in T$ and $A,B\in\Delta(\Hcal)$
	\begin{align*}
		A \prefge B \iff (t \cdot A) \prefge (t \cdot B)
	\end{align*}
	This means that $u_t$ is strategically equivalent to $u$. 
	By Lemma \ref{lem:uni_affine}, we know there exists constants, which we label $r(t)$ and $\gamma(t)$, such that,
    \begin{align*}
		u(t \cdot A) = u_t(A) &= r(t) +\gamma(t)u(A), 	&	
		u(t \cdot B) = u_t(B) &= r(t) +\gamma(t)u(B).
	\end{align*}
	Simple algebra shows that
	\begin{align*}
		u(t \cdot A) - \gamma(t)u(A) &= u(t \cdot B) - \gamma(t)u(B),\\
		u(t \cdot A) + \gamma(t)u(B) &= u(t \cdot B) + \gamma(t)u(A)
	\end{align*}
	\vspace{-3em}
	\begin{align*}
		\frac{1}{1+\gamma(t)}u(t \cdot A) + \frac{\gamma(t)}{1+\gamma(t)}u(B) 
		&= \frac{1}{1+\gamma(t)}u(t \cdot B) + \frac{\gamma(t)}{1+\gamma(t)}u(A),\\		
		\frac{1}{1+\gamma(t)}(t \cdot A) + \frac{\gamma(t)}{1+\gamma(t)}B 
		&\sim \frac{1}{1+\gamma(t)}(t \cdot B) + \frac{\gamma(t)}{1+\gamma(t)}A.		
	\end{align*}
	The last line follows from the vNM utility theorem.
\end{proof}

\begin{definition}[Strategic Equivalence]
	Two utility functions $u_1$ and $u_2$ are strategically equivalent if and only if they imply the same preference ranking for any two distributions.
\end{definition}

\begin{lemma}\label{lem:uni_affine}
    Two utility functions $u_1$ and $u_2$ are strategically equivalent if and only if there exists two constants $a$ and $b>0$ such that
	\begin{align*}
		u_1(x) = a + b u_2(x) \text{ for all $x$.}
	\end{align*}
\end{lemma}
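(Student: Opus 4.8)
The plan is to prove the two directions separately, using throughout that both $u_1$ and $u_2$ are expected-utility representations, i.e.\ they satisfy the linearity property $u_i(\sum_k p_k h_k) = \sum_k p_k u_i(h_k)$ (consequence~2 of \autoref{thm:vnm}); without this linearity the claim is false, since any strictly increasing reparametrization of $u_2$ would preserve the ranking while destroying affineness. The reverse direction ($\Leftarrow$) is immediate: if $u_1 = a + b\,u_2$ with $b > 0$, then for any $A, B \in \Delta(\hists)$ we have $u_1(A) \ge u_1(B) \Leftrightarrow b\,u_2(A) \ge b\,u_2(B) \Leftrightarrow u_2(A) \ge u_2(B)$, so the two utilities rank every pair identically and are strategically equivalent.

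For the forward direction ($\Rightarrow$), the cleanest route is to observe that strategic equivalence means $u_1$ and $u_2$ represent one and the same preference relation $\prefge$, and then invoke the uniqueness clause of the vNM theorem: any two linear utilities representing the same vNM-rational preference agree up to a positive affine transformation, which is exactly $u_1 = a + b\,u_2$ with $b > 0$. If instead a self-contained argument is preferred, I would fix two reference points $x_0, x_1 \in \Delta(\hists)$ with $u_2(x_0) < u_2(x_1)$, set $b = \frac{u_1(x_1) - u_1(x_0)}{u_2(x_1) - u_2(x_0)}$ and $a = u_1(x_0) - b\,u_2(x_0)$, and note $b > 0$ because $x_1 \prefgt x_0$. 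For any $x$ whose $u_2$-value lies between $u_2(x_0)$ and $u_2(x_1)$, linearity lets me construct the mixture $y = (1-\lambda)x_0 + \lambda x_1$ satisfying $u_2(y) = u_2(x)$, whence $x \sim y$ by strategic equivalence, and expanding $u_1(y)$ by linearity collapses to $u_1(x) = a + b\,u_2(x)$.

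The main obstacle I anticipate is covering points $x$ whose utility lies outside the interval $[u_2(x_0), u_2(x_1)]$, which can occur whenever the utilities are unbounded, since there need be no best or worst history. For such $x$ I would run the interpolation in reverse, expressing one anchor as a mixture of the other anchor and $x$; for example, when $u_2(x) > u_2(x_1)$, I would choose $\mu \in (0,1)$ with $u_2(x_1) = (1-\mu)u_2(x_0) + \mu\,u_2(x)$, so that $(1-\mu)x_0 + \mu x \sim x_1$, then solve the resulting linear equation for $u_1(x)$ and verify it again equals $a + b\,u_2(x)$. The remaining work is bookkeeping: dispatching the degenerate constant case (where $u_2$ flat forces $u_1$ flat, and any positive $b$ suffices) and confirming that the extracted constants $a$ and $b$ do not depend on the chosen anchors.
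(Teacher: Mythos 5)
Your proposal is correct, and it differs from the paper's proof in instructive ways. The paper follows Theorem 4.1 of \citet{keeney1993decisions}: it anchors the interpolation at extreme points $x_0 \in \argmin_x u_2(x)$ and $x^* \in \argmax_x u_2(x)$, uses continuity to obtain, for each $x$, a $c \in [0,1]$ with $x \sim c\,x^* + (1-c)\,x_0$, and solves for $c$ to extract the affine relation. Your self-contained fallback is the same interpolation idea but with two arbitrary anchors and a reverse interpolation (expressing an anchor as a mixture of the other anchor and $x$) for points whose utility falls outside the anchor interval; this is a genuine improvement in rigor, because on $\Delta(\hists)$ with finitely supported distributions over arbitrarily long histories a linear utility need not attain a maximum or minimum (e.g.\ the utility assigning each history its length, which arises from $r \equiv 1$, $\gamma \equiv 1$), so the paper's $\argmin$/$\argmax$ anchors need not exist, whereas your construction goes through unconditionally. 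Your primary route---observing that strategic equivalence means $u_1$ and $u_2$ represent the same preference relation and invoking the uniqueness-up-to-positive-affine-transformation clause of \autoref{thm:vnm}---is an even shorter path the paper does not take, and it is legitimate given linearity: the preference induced by $u_2$ then satisfies Axioms 1--4 by the easy direction of the theorem, and both functions are vNM representations of it. You are also right, and the paper leaves implicit, that linearity of $u_1$ and $u_2$ is essential to the lemma as stated, since for arbitrary functions any strictly increasing reparametrization preserves rankings while destroying affineness; your reverse-interpolation algebra indeed recovers $u_1(x) = a + b\,u_2(x)$, and your dispatch of the constant (degenerate) case matches the paper's.
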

\begin{proof} 
    The first direction follows a similar argument to Theorem 4.1 from \citet{keeney1993decisions}.

    $\implies$ Let $x_0 \in \argmin_x u_2(x)$ and $x^* \in \argmax_x u_2(x)$.  We first consider the degenerate case where $u_2(x_0) = u_2(x^*)$, so $u_2$ is a constant function.  By strategic equivalence $u_1$ must also be a constant function, trivially satisfying the implication.  Now consider the case where $u_2(x^*) > u_2(x_0)$.
    For any $x$, there exists $c\in [0, 1]$ such that $x \sim c x^* + (1-c) x_0$ under $u_2$, and by strategic equivalence $u_1$ as well.  Therefore,
    \begin{align*}
		u_i(x) = c u_i(x^*) + (1-c)u_i(x_0) \text{ for $i=1,2$.}
	\end{align*}
	Letting $i=2$ and solving for $c$ we get
	\begin{align*}
		c = \frac{u_2(x)-u_2(x_0)}{u_2(x^*)-u_2(x_0)}.
	\end{align*}
	Substituting this value of $c$ in when $i=1$ gives.
	\begin{align*}
	    u_1(x) &= \left(\frac{u_2(x)-u_2(x_0)}{u_2(x^*)-u_2(x_0)} \right)u_1(x^*) + \left(1 -  \frac{u_2(x)-u_2(x_0)}{u_2(x^*)-u_2(x_0)}\right)u_1(x_0),\\
	    &= \underbrace{\left(u_1(x_0)+ \frac{u_2(x_0)u_1(x_0)-u_2(x_0)u_1(x^*)}{u_2(x^*)-u_2(x_0)} \right)}_a + \underbrace{\left(\frac{u_1(x^*) - u_1(x_0)}{u_2(x^*)-u_2(x_0)} \right)}_{b}u_2(x).
	\end{align*}
    Notice that $b>0$, because $u_2(x^*) > u_2(x_0)$.

	$\impliedby$ Assume there exists constants $a$ and $b>0$ such that 
	\begin{align*}
		u_1(x) = a + b u_2(x) \text{ for all $x$.}
	\end{align*}
    Take any two $x$ and $x'$ such that $u_2(x) > u_2(x')$.
    Scaling by a positive constant $b$ and shifting the utility by $a$ leaves the relation unchanged.
    Therefore,
	\begin{align*}
	    u_2(x) &> u_2(x'),\\
		a + b u_2(x) &> a + b u_2(x'),\\
		u_1(x) &> u_1(x').
	\end{align*}
\end{proof}

\additivitytheorem*

\begin{proof}
	The first direction follows from Independence and Lemma \ref{lem:equiv}.
	The converse is reduced to the Memoryless Axiom, then the remainder of the argument follows from Theorem \ref{thm:memoryless}.
	
	$\implies$ Take any $t\in\Ocal\times \Acal$, and $A,B,C,D\in\Delta(\Hcal)$ for which
	\begin{align*}
		p(t \cdot A) + (1-p)C \prefge p(t \cdot B) + (1-p)D.
	\end{align*}
	By the Independence Axiom, the preference remains unchanged after mixing distributions with $(t'\cdot X)$ by an amount $q\in[0,1]$:
	\begin{align*}
		qp(t \cdot A) &+ q(1-p)C +(1-q)(t'\cdot X) \prefge qp(t \cdot B) + q(1-p)D + (1-q)(t'\cdot X).
	\end{align*}
	If we take $qp = (1-q)$, then we can form uniform compound distributions of $(t,\cdot)$ and $(t'\cdot X)$:
	\begin{align*}
		qp[ (t \cdot A) + (t'\cdot X) ] + q(1-p)C
		&\prefge qp[(t \cdot B)+ (t'\cdot X)] + q(1-p)D,\\
		2qp\left[ \frac{1}{2}(t \cdot A) + \frac{1}{2}(t'\cdot X) \right] + q(1-p)C 
		&\prefge 2qp\left[\frac{1}{2}(t \cdot B) + \frac{1}{2}(t'\cdot X)\right] + q(1-p)D,\\
		2qp\left[ \frac{1}{2}(t'\cdot A) + \frac{1}{2}(t\cdot X) \right] + q(1-p)C 
		&\prefge 2qp\left[\frac{1}{2}(t',B) + \frac{1}{2}(t\cdot X)\right] + q(1-p)D.
	\end{align*}
	The last line follows from Lemma \ref{lem:equiv}, which takes Temporal Indifference as its precondition.
	
	Finally, we invoke the Independence axiom to remove $(t\cdot X)$ mixture:
	\begin{align*}
		p (t'\cdot A) + (1-p)C &\prefge p(t',B)+ (1-p)D.
	\end{align*}

	$\impliedby$ For any $t,t'\in T$, and $A,B,C,D\in\Delta(\Hcal)$, the Additivity Axiom states
	\begin{align*}
		p&(t, A) + (1-p) C \prefge p(t \cdot B) +(1-p)D
		\iff  p(t', A) + (1-p) C \prefge p(t',B) +(1-p)D.
	\end{align*}
	This reduces to the Memoryless Axiom if we restrict $t$ to $\Ocal\times\Acal$, and take $t'=\epsilon$ and $C=D$:
	\begin{align*}
		p(t, A) + (1-p) D \prefge p(t \cdot B) +(1-p)D
		&\iff  p(\epsilon, A) + (1-p) D \prefge p(\epsilon,B) +(1-p)D,\\
 		p(t, A) + (1-p) D \prefge p(t \cdot B) +(1-p)D
 		&\iff  p A + (1-p) D \prefge p B +(1-p)D,\\
  		(t, A) \prefge (t \cdot B) &\iff  A  \prefge B.
	\end{align*}
	The last line follows from independence on $D$.
	
	Finally, Theorem \ref{thm:memoryless} established that the Memoryless Axiom holds if and only if the Temporal $\gamma$-Indifference Axiom holds. Therefore, the remainder of the proof follows from that.
\end{proof}

\begin{lemma}\label{lem:equiv}
	If Temporal $\gamma$-Indifference holds when $\gamma=1$, along with Axioms 1-5, then for any $t,t'\in \Ocal\times \Acal$, and $A,X\in\Delta(\Hcal)$, 
	\begin{align*}
				\frac{1}{2}(t \cdot A) + \frac{1}{2}(t'\cdot X) &\sim \frac{1}{2}(t'\cdot A) + \frac{1}{2}(t\cdot X).
	\end{align*}
	\begin{proof}
	Take some $t$ from $\Ocal\times \Acal$, and let $\gamma(t)=1$.
	Temporal $\gamma$-Indifference states that for any distributions $A$ and $X$
	\begin{align*}
		\frac{1}{2} (t \cdot A)+ \frac{1}{2}X &\sim \frac{1}{2} (t\cdot X)+ \frac{1}{2}A.
	\end{align*}
	This applies to any $t$, so it must apply to any other $t'$ from $\Ocal\times \Acal$ with $\gamma(t')=1$:
	\begin{align*}
		\frac{1}{2} (t \cdot A)+ \frac{1}{2}X \sim \frac{1}{2} (t\cdot X)+ \frac{1}{2}A 
		&\iff \frac{1}{2} (t'\cdot A)+ \frac{1}{2}X \sim \frac{1}{2} (t'\cdot X)+ \frac{1}{2}A.
	\end{align*}
	The vNM utility theorem guarantees these preferences have a utility representation, meaning:
	\begin{align*}
		\frac{1}{2} u(t \cdot A)+ \frac{1}{2}u(X) = \frac{1}{2} u(t\cdot X)+ \frac{1}{2}u(A) 
		&\iff \frac{1}{2} u(t'\cdot A)+ \frac{1}{2}u(X) = \frac{1}{2} u(t'\cdot X)+ \frac{1}{2}u(A).
	\end{align*}
	Regardless of whether a distribution involves $t$ or $t'$, the difference between utilities will be equal:
	\begin{align*}
		u(t \cdot A)+ u(X) - u(t\cdot X) - u(A)
		&= u(t'\cdot A)+ u(X) - u(t'\cdot X) - u(A),\nonumber\\
		u(t \cdot A) - u(t\cdot X) &= u(t'\cdot A) - u(t'\cdot X),\nonumber\\
		u(t \cdot A) + u(t'\cdot X) &= u(t'\cdot A) + u(t\cdot X),\nonumber\\
		u\left( \frac{1}{2}(t \cdot A) + \frac{1}{2}(t'\cdot X)\right) &= u\left(\frac{1}{2}(t'\cdot A) + \frac{1}{2}(t\cdot X)\right),\nonumber\\
		\frac{1}{2}(t \cdot A) + \frac{1}{2}(t'\cdot X) &\sim \frac{1}{2}(t'\cdot A) + \frac{1}{2}(t\cdot X).\label{eq:equiv}
	\end{align*}
	\end{proof}
\end{lemma}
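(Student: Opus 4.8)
The plan is to obtain the claimed indifference by applying Temporal $\gamma$-Indifference (\autoref{axiom:temporal_gamma_indifference}) separately to the two transitions $t$ and $t'$, converting each instance into a utility equation through the vNM theorem, and then combining the two equations. Throughout I work in the setting where $\gamma(t)=1$ for every transition, so that the axiom applies with $\gamma(t)=\gamma(t')=1$ to both transitions of interest.

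First I would instantiate the axiom at $t$ with the second distribution taken to be $X$, which yields
\begin{align*}
\frac{1}{2}(t\cdot A) + \frac{1}{2}X \sim \frac{1}{2}(t\cdot X) + \frac{1}{2}A,
\end{align*}
and do the same at $t'$ to obtain the analogous indifference with $t$ replaced by $t'$. By \autoref{thm:vnm} (consequence~1 to pass from $\sim$ to equal utility, consequence~2 to push $u$ through each mixture), each indifference becomes an equation of real numbers. After clearing the factor $\frac{1}{2}$ and rearranging, both instances reduce to the single statement that the utility gap created by prepending a transition equals the gap between the bare outcomes, namely
\begin{align*}
u(t\cdot A) - u(t\cdot X) = u(A) - u(X) = u(t'\cdot A) - u(t'\cdot X).
\end{align*}

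The finish is then pure bookkeeping. Equating the two outer expressions and moving the $t'$-term across gives $u(t\cdot A) + u(t'\cdot X) = u(t'\cdot A) + u(t\cdot X)$; halving and reading this backwards through consequence~2 of \autoref{thm:vnm} (linearity of $u$ over mixtures) followed by consequence~1 (equal utility means indifference) recovers exactly
\begin{align*}
\frac{1}{2}(t\cdot A) + \frac{1}{2}(t'\cdot X) \sim \frac{1}{2}(t'\cdot A) + \frac{1}{2}(t\cdot X).
\end{align*}

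I do not expect a genuine obstacle in the computation; the single conceptual point worth isolating is the invariance the argument rests on. With $\gamma=1$, delaying either $A$ or $X$ by one transition shifts its utility by the same additive amount irrespective of which transition is used, so the difference $u(A)-u(X)$ is preserved under prepending by \emph{any} transition. That common-difference property is what permits the $t$- and $t'$-instances to be chained, and it is precisely where the hypothesis $\gamma=1$ enters: for a general $\gamma(t)\neq\gamma(t')$ the two differences would be rescaled by distinct factors and the swap would no longer hold.
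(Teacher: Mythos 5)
Your proposal is correct and follows essentially the same route as the paper's proof: instantiate Temporal $\gamma$-Indifference with $\gamma=1$ at both $t$ and $t'$, pass to utility equations via the two consequences of \autoref{thm:vnm}, equate the common difference $u(t\cdot A)-u(t\cdot X)=u(A)-u(X)=u(t'\cdot A)-u(t'\cdot X)$, and fold back through linearity to recover the indifference. If anything, your rendering is marginally cleaner than the paper's, since you derive both utility equations unconditionally rather than phrasing the two instances as a biconditional before extracting the equal differences.
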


\section{Preferences in the Average Reward Setting}

The notion of the cumulative sum eventually being larger allows us to capture settings where the series is convergent (i.e., $\lim_{n\rightarrow\infty} V^\pi_n$ exists) as well as cases where it is not, such as average reward; if one policy has higher average reward, then its finite sum must eventually be larger.  For this analysis we will assume without loss of generality that the transition-dependent discount $\gamma(t) = 1 \;\forall t\in T$.

This last point is made formal in the following proposition, where the average reward for policy $\pi$ is $\mu_\pi \triangleq \lim_{n\rightarrow\infty}\frac{1}{n}V^{\pi}_n$.
\begin{restatable}{proposition}{avgreward}\label{prop:average_reward}
	For any policies $\pi_A,\pi_B$ whose average rewards $\mu_A, \mu_B$ exist, if  $\mu_A > \mu_B$, then there exists an $N$ such that $V^A_n > V^B_n$ for all $n\geq N$.
\end{restatable}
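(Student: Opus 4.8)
The plan is to unpack the definition of average reward directly and show that the hypothesis $\mu_A > \mu_B$ forces the finite sums to separate eventually. First I would set $\delta \defeq \mu_A - \mu_B > 0$. Since $\mu_\pi = \lim_{n\to\infty} \frac{1}{n} V^\pi_n$ exists for both policies, I would translate the limit statements into $\varepsilon$-$N$ form: for any $\varepsilon > 0$ there is an $N_A$ such that $\left|\frac{1}{n}V^A_n - \mu_A\right| < \varepsilon$ for all $n \ge N_A$, and similarly an $N_B$ for $\pi_B$. The natural choice is $\varepsilon = \delta/2$ (or any value strictly less than $\delta/2$), which separates the two averages with room to spare.

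Next I would combine these two bounds. For $n \ge \max(N_A, N_B)$ we have $\frac{1}{n}V^A_n > \mu_A - \varepsilon$ and $\frac{1}{n}V^B_n < \mu_B + \varepsilon$, so
\begin{align*}
\frac{1}{n}V^A_n - \frac{1}{n}V^B_n > (\mu_A - \varepsilon) - (\mu_B + \varepsilon) = \delta - 2\varepsilon.
\end{align*}
With $\varepsilon < \delta/2$, the right-hand side is strictly positive, giving $\frac{1}{n}\left(V^A_n - V^B_n\right) > 0$, and multiplying through by $n > 0$ yields $V^A_n > V^B_n$. Setting $N \defeq \max(N_A, N_B)$ completes the argument, since the inequality then holds for all $n \ge N$.

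I do not anticipate a genuine obstacle here; the statement is essentially a packaging of the standard fact that the averages separating forces the partial sums to separate, scaled back up by the factor of $n$. The only point requiring a little care is the choice of $\varepsilon$: it must be taken strictly smaller than $\delta/2$ so that the gap $\delta - 2\varepsilon$ remains strictly positive, and one should note that $N$ depends on this $\varepsilon$ (and hence on $\delta$) but that this is harmless since we only need existence of some finite $N$. One might also remark that the result is one-directional — it does not claim anything when $\mu_A = \mu_B$, in which case the finite sums could oscillate in their ordering — which is exactly why Assumption on policy preferences uses the ``eventually larger'' formulation rather than a limit of the sums themselves.
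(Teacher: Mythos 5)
Your proof is correct and follows essentially the same route as the paper's: both translate the limits into $\varepsilon$-$N$ form, take $N = \max(N_A, N_B)$, and subtract the two bounds to separate the partial sums. The only cosmetic difference is that the paper sets $\varepsilon_A = \varepsilon_B = \tfrac{1}{2}(\mu_A - \mu_B)$ exactly and relies on the strictness of the limit inequalities, whereas you take $\varepsilon$ strictly below $\delta/2$ to keep a positive margin; both choices work.
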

\begin{proof}
	Assume $\mu_A > \mu_B$ exist. According to the definition of a limit, for any $\varepsilon_A,\varepsilon_B>0$, there exists some $N_A$ and $N_B$ such that 
	\begin{align*}
		\mu_A-\varepsilon_A < &\frac{1}{k}V_k^{A} &
		\mu_B+\varepsilon_B > &\frac{1}{m}V_m^{B},
	\end{align*}
	for all $k\geq N_A$ and $m \geq N_B$. 
	
	Choose $\varepsilon_A=\varepsilon_B=\frac{1}{2}(\mu_A - \mu_B) > 0$ and let $N=\max\{N_A,N_B\}$, so the above inequalities hold for all $n\geq N$.
	If you negate the second inequality and add them together, then for all $n > N$,
	\begin{align*}
	    (\mu_A - \varepsilon_A) - (\mu_B + \varepsilon_B) <&
	    \frac{1}{n}V^A_n - \frac{1}{n}V^B_n \\
	    (\mu_A - \mu_B) - (\varepsilon_A + \varepsilon_B) <&
	    \frac{1}{n}\left(V^A_n - V^B_n\right)\\
	    (\mu_A - \mu_B) - (\mu_A - \mu_B) <&
	    \frac{1}{n}\left(V^A_n - V^B_n\right)\\
	    0 <& \frac{1}{n}\left(V^A_n - V^B_n\right) \\
	\end{align*}
	Thus, $V^A_n > V^B_n$.
\end{proof}
We can show a similar result for the specialized bias-optimal case of average reward. 
\begin{definition}[Bias Value]
The relative difference in total reward gathered is
\begin{align*}
	V^\pi \triangleq \lim_{n\rightarrow \infty} \Ebf\left[\sum_{i=1}^n( R^\pi_i - \mu_\pi)\right].
\end{align*}
\end{definition}
\begin{restatable}{proposition}{biasoptimal}
	For any policies $\pi_A,\pi_B$ whose average rewards $\mu_A,\mu_B$ exist, if $\mu_A = \mu_B$ and the bias values exist with $V^A > V^B$, then there exists an $N$ such that $V^A_n > V^B_n$ for all $n\geq N$.
\end{restatable}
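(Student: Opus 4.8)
The plan is to mirror the structure of the proof of \autoref{prop:average_reward}, but to work one order finer since the leading average-reward terms now coincide and therefore cancel rather than dominate. The key observation is that the bias value can be rewritten as a limit of the finite-horizon sums recentered by the average reward: by linearity of expectation, $\Ebf[\sum_{i=1}^n (R^\pi_i - \mu_\pi)] = V^\pi_n - n\mu_\pi$, so the definition of the bias value gives $V^\pi = \lim_{n\rightarrow\infty}(V^\pi_n - n\mu_\pi)$.

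First I would form the difference of the two finite-horizon values and use the hypothesis $\mu_A = \mu_B =: \mu$ to cancel the linear-in-$n$ contributions:
\begin{align*}
V^A_n - V^B_n = (V^A_n - n\mu) - (V^B_n - n\mu).
\end{align*}
Each parenthesized term is exactly the recentered sum whose limit is the corresponding bias value, so passing to the limit yields $\lim_{n\rightarrow\infty}(V^A_n - V^B_n) = V^A - V^B$, which is strictly positive by hypothesis.

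Finally I would invoke the definition of the limit directly, as in \autoref{prop:average_reward}: choosing $\varepsilon = \tfrac{1}{2}(V^A - V^B) > 0$, there exists $N$ such that for all $n \geq N$ the quantity $V^A_n - V^B_n$ lies within $\varepsilon$ of $V^A - V^B$, and hence exceeds $V^A - V^B - \varepsilon = \tfrac{1}{2}(V^A - V^B) > 0$. This gives $V^A_n > V^B_n$ for all $n \geq N$, as required.

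There is no substantial obstacle here; the only point requiring care is that the argument genuinely relies on the equality $\mu_A = \mu_B$. If the averages differed, the $n\mu_A$ and $n\mu_B$ terms would not cancel and the difference $V^A_n - V^B_n$ would instead diverge --- but in that case \autoref{prop:average_reward} already settles the comparison. The bias value is precisely the correct second-order quantity to break ties among policies with equal average reward, and its definition is arranged so that the difference of finite sums converges to the difference of biases.
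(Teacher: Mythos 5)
Your proof is correct and follows essentially the same route as the paper's: both rewrite the bias value as $V^\pi = \lim_{n\rightarrow\infty}(V^\pi_n - n\mu_\pi)$, use $\mu_A = \mu_B$ to cancel the linear-in-$n$ terms, and take $\varepsilon = \tfrac{1}{2}(V^A - V^B)$ in the limit definition to conclude. The only (cosmetic) difference is that you combine the two limits into a single limit of the difference before invoking the $\varepsilon$--$N$ definition, whereas the paper applies the definition to each policy's recentered sum separately and adds the resulting inequalities.
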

\begin{proof}	
	Assume $\mu_A = \mu_B$ exist as well as the bias optimal values $V^A > V^B$.  
	\begin{align*}
		V^A &\triangleq \lim_{n\rightarrow \infty} \Ebf\left[\sum_{i=1}^n( R^A_i - \mu_A)\right], & V^B &\triangleq\lim_{n\rightarrow \infty} \Ebf\left[\sum_{i=1}^n( R^B_i - \mu_B)\right].
	\end{align*}
	Distributing the expectation, breaking the summation, and recognizing the expected $n$-step sums as $V_n^{A}=\Ebf[\sum_{i=1}^n R^A_i]$ and $V_n^{B}=\Ebf[\sum_{i=1}^nR^B_i]$, we have
	\begin{align*}
		V^A &= \lim_{n\rightarrow \infty}(V^A_n - n\mu_A), & V^B &\triangleq\lim_{n\rightarrow \infty} ( V^B_n - n\mu_B).
	\end{align*}	

	According to the definition of a limit, for any $\varepsilon_A,\varepsilon_B>0$, there exists some $N_A$ and $N_B$ such that 
	\begin{align*}
		V^A-\varepsilon_A < &V_k^{A}-k\mu_A & 		
		V^B+\varepsilon_B > &V_m^{B}-m\mu_B,
	\end{align*}
	for all $k\geq N_A$ and $m \geq N_B$. 
	
	Choose $\varepsilon_A=\varepsilon_B=\frac{1}{2}(V^A - V^B) > 0$ and let
	$N=\max\{N_A,N_B\}$, so that for all $n\geq N$, the above inequalities hold.  If you negate the second inequality and add them together, then for all $ n > N$,
	\begin{align*}
	    (V^A - \varepsilon_A) - (V^B + \varepsilon_B) <&
	    \frac{1}{n}V^A_n - \frac{1}{n}V^B_n - n\mu_A + n\mu_B\\
	    \intertext{Since $\mu_A = \mu_B$,}
	    (V^A - \varepsilon_A) - (V^B + \varepsilon_B) <&
	    \frac{1}{n}\left(V^A_n - V^B_n\right)\\
	    (V^A - V^B) - (\varepsilon_A + \varepsilon_B) <&
	    \frac{1}{n}\left(V^A_n - V^B_n\right)\\
	    (V^A - V^B) - (V^A - V^B) <&
	    \frac{1}{n}\left(V^A_n - V^B_n\right)\\
	    0 <& \frac{1}{n}\left(V^A_n - V^B_n\right) \\
	\end{align*}
	Thus, $V^A_n > V^B_n$.
\end{proof}
\section{A Constructive Algorithm}

We now develop an algorithm that can construct the realizing reward function given a preference relation that is known to satisfy Axioms 1-5. 
\autoref{alg:reward_design} uses the preference relation to sort outcomes, then computes rewards and two-step utilities by scaling their rankings relative to their break-even point with the best and worst outcomes. 
With this information, the discount factor can be computed in closed-form.
Below we summarize the procedures for using a preference relation to sort and scale outcomes. 

PrefSort (\autoref{alg:pref_sort}) is a procedure for sorting a set of outcomes according to their preference.
Our implementation takes in a preference relation and set of outcomes $\Tcal$.
The procedure returns a tuple of outcomes which are sorted in ascending order, according to $\Rcal$, with MergeSort.  

PrefScale (\autoref{alg:pref_scale}) is a procedure to determine the relative degree of preference between outcomes.
In our implementation, it takes a preference relation, a tuple of preference-sorted outcomes $\Tcal$, and a tolerance parameter $\varepsilon\in(0,1]$.
The procedure returns a set of numerical scale factors that reflect the degree to which each outcome is preferred relative to best and worst outcomes. 
Our implementation assigns scale factors using a binary line search informed by the continuity axiom \autoref{axiom:continuity}.
The inner loop of the line search terminates when the difference between subsequent factors differ less than a pre-specified $\varepsilon$.

%
The complexity of \autoref{alg:reward_design} only depends on $|\A|$ and $|\O|$.
The call to PrefSort requires $O(n \log n)$ operations, where $n = 2|\A \times \O|$, ignoring an additive constant. 
The call to PrefScale requires $O(n)$ operations.
We then run two for loops, the largest of which iterating through $|\Ucal| = 2|\A \times \O| + 2$ elements. 
Thus, the total run-time is $O\left(2|\A \times \O| \log|\A \times \O| \right)$.

\begin{algorithm}[!t]
\caption{Reward and Discount Design}
\label{alg:reward_design}
\textsc{Input:} $\Rcal=\{\preflt, \prefle, \prefgt, \prefge, \sim \}$. \\
\textsc{Output:} $r : \Acal \times \Ocal \rightarrow \mathbb{R}$, $\gamma : \Acal \times \Ocal \rightarrow [0,1]$. 

\begin{algorithmic}[1]
\STATE $\Tcal_1 = \Acal \times \Ocal\cup \{\varepsilon\}$
\STATE $\Tcal_2 = \{ t \cdot t \ \colon \ t \in \Tcal_1\}$ 

\STATE $\Ucal = $ PrefSort$(\Rcal, \Tcal_1\cup\Tcal_2 )$
\STATE $\Pcal = $ PrefScale$(\Rcal, \Ucal,\epsilon)$
\STATE Let $i_\varepsilon$ be the index of $\varepsilon$ in $\Ucal$.
\FOR{$i \in \{1,\cdots,|\Ucal|\}$}
	\STATE $u(\tau_i) = p_{i_\varepsilon}-p_i$
\ENDFOR

\FOR{$i\in\{1,\cdots,|\Tcal_1|\}$}
    \STATE $r(t_i) = u(t_i)$
	\STATE $\gamma(t_i) = u(t_i\cdot t_i)/u(t_{i}) -1$
\ENDFOR

\STATE \textbf{return} $r$, $\gamma$
\end{algorithmic}
\end{algorithm}

\begin{algorithm}[!t]
\caption{PrefSort}
\label{alg:pref_sort}
\textsc{Input:} $\Rcal, \Tcal=(t_1,\cdots,t_n)$. \\
\textsc{Output:} Sorted $\Tcal$. 

\begin{algorithmic}[1]
\IF{$n\leq 1$}
	\STATE \textbf{output} $\Tcal$
\ENDIF
\STATE $\Lcal, \Kcal \gets ()$
\FOR{$i =1,\cdots,n$}
	\IF{$t_i \prefge t_{\lfloor n/2\rfloor}$}
	\STATE $\Kcal \gets \text{concat}(\Kcal, t_i)$
	\ELSE
	\STATE $\Lcal \gets \text{concat}(\Lcal, t_i)$
	\ENDIF
\ENDFOR
\STATE $\Lcal\gets$PrefSort$(\Lcal)$
\STATE $\Kcal\gets$PrefSort$(\Kcal)$	
\STATE $\Tcal \gets \text{concat}(\Lcal, \Kcal)$
\STATE \textbf{output} $\Tcal$
\end{algorithmic}
\end{algorithm}

\begin{algorithm}[!t]
\caption{PrefScale}
\label{alg:pref_scale}
\textsc{Input:} $\Rcal, \Tcal=(t_1 \prefle t_2 \prefle \cdots), \epsilon\in(0,1]$. \\
\textsc{Output:} $\Pcal = \{p_1,\cdots,p_{|\Tcal|}\}$. 

\begin{algorithmic}[1]
\FOR{$i\in\{1,\cdots,|\Tcal|\}$}	
\STATE $p^{k-1}_i\gets 1$
\STATE $\Delta \gets 2\epsilon$
\WHILE{$\Delta \geq \epsilon$}
	\IF{$t_i \succ p^{k-1}_it_1 + (1-p^{k-1}_i)t_{|\Tcal|}$}
	\STATE $p^k_i \gets 3p^{k-1}_i / 2$
	\ELSIF{$t_i \prec p^{k-1}_it_1 + (1-p^{k-1}_i)t_{|\Tcal|}$}
	\STATE $p^k_i \gets p^{k-1}_i/2$
	\ELSE
	\STATE \textbf{break}
	\ENDIF
	\STATE $\Delta\gets |p^{k}_i-p^{k-1}_i|$
	\STATE $p^{k-1}_i\gets p^{k}_i$
\ENDWHILE
\STATE $p_i= p^{k-1}_i$
\ENDFOR
\STATE \textbf{output} $\Pcal$.
\end{algorithmic}
\end{algorithm}

%
\section{Additional Comments on Objective Goals}
As a special case, consider when the environment can be modeled as a Partially Observable MDP (POMDP, \cite{cassandra1994acting}), and suppose that the designer observes the environment states and the agent's actions, $\bar{\Ocal}=\Scal\times\Acal$, so that histories are $\bar{h}=a_1,s_1,a_2,s_2,\ldots$.
If Axioms 1-5 apply to the designer's preference relation on this distributions over histories, where the reference to transition $(\O \times \A)$ in Axiom 5 now refers to $(\S \times \A)$, then \autoref{thm:markov-reward} extends to reward functions $r : (\S\times \A) \rightarrow \mathbb{R}$ and discount functions $\gamma : (\S\times \A) \rightarrow [0,1]$.

Note that the in the objective goals setting, the designer's states may encode more than necessary to produce Markov state transitions.  
For example, they could encode a reward bundle~\citep{abelexpressing}, a finite state machine defining an otherwise non-Markov reward.  This should allow us to define a variant of \autoref{axiom:temporal_gamma_indifference} that only needs to be satisfied when state transitions consist of POMDP states produced with any other finite state machine that transitions on POMDP transitions.  In other words, preferences that can be expressed as reward bundles are captured by this extended axiom.

\section{Additional Comments on Multiple Objectives}
Here we examine the temporal nature of handling multiple objectives in view of an agent's lived experience. 
We present an additional axiom with which we may want to constrain goals and purposes.  
Let $A[h\rightarrow B]$ refer to the distribution over histories where all histories with non-zero support in $A$ that share the prefix $h$ are removed, and their support shifted to $h\cdot B$.

%
\begin{axiom}[Sequential Consistency]\label{axiom:seq-consistency}
For all $A, B, C \in\Delta(\hists)$ and $h \in \hists$, $h \cdot A \prefgt h \cdot B$ if and only if $C[h\rightarrow A] \prefgt C[h\rightarrow B]$.
\end{axiom}

In other words, extensions to a hypothetical history that are more aligned with some goal or purpose do not change if that hypothetical history becomes certain.  
In behavioral terms, goal-aligning behavior following a possible future is the same as goal-aligning behavior following that future occurring.  
Alternatively, goal-aligning behavior after some history should not depend on hypothetical alternative pasts that did not come about.

This is related to notions of \emph{dynamic inconsistency} in behavioural economics\footnote{It is best to think of ``time passing'' in our formalization as ``the past'' at some time $t$ becoming certain --- fixing $h_t \in \hists_t$ --- and so the policy and environment specify distributions over histories whose support has $h_t$ as a prefix.}: one may prefer to receive \$110 in 101 days to \$100 in 100 days, and yet when 100 days passes the same person may now prefer \$100 to waiting one day for \$110 (i.e., human preferences are often not dynamically consistent).  It is straightforward to see that Independence (\autoref{axiom:independence}) implies Sequential Consistency (Axiom~\ref{axiom:seq-consistency}).  However, while one may reject Independence, it seems much more difficult to reject Sequential Consistency, i.e., the goal-alignment of extensions of histories change if the history becomes certain.  Notice, however, that Constrained MDP formulations for capturing multiple objectives, in fact, violate this more specific axiom.

\section{Additional Related Work}
\subsection{Economics}
%
Economics has been studying the nature of rational behavior for centuries. In the 1700s, Gabriel Cramer and Daniel Bernoulli independently formulated what is now called the Expected Utility Hypothesis \citep{machina1990expected} in response to the ``St. Petersburg Paradox'' \citep{martin2011st} articulated by Daniel's cousin Nicholas \cite{bernoulli1738exposition}. The Expected Utility Hypothesis says, roughly, that individuals ``might maximize the expectation of ‘utility’ rather than of monetary value.'' 
\citep{machina1990expected}. 
Centuries later, \citet{ramsey2016truth} provided the first formal axiomatic treatment of expected utility, which would later be refined by \citet{vonneumann1953theory} to form the now widely adopted foundations of decision theory. Following this development, an expansive body of research has explored how to account for other aspects of rationality, including uncertainty \citep{kreps1978temporal}, time \citep{koopmans1960stationary,koopmans1964stationary}, and computation \citep{lewis1985effectively,rustem1990rationality,richter1999computable}.

\section{Constant Discounting}
In this section, we comment on the constant discounting case, which is commonly employed in practice. 
According to our results, if the preference relation satisfies Temporal $\gamma$-Indifference with a constant discount, then there exists a Markov reward that can effectively express the desired goal.
However, it's important to note that \citet{pitis2019rethinking} highlights the limitation of using a constant discount to express general preferences. 
This observation holds true, particularly in episodic problems where the discount is set to zero upon termination and remains at a constant positive value less than one elsewhere. 
While our results encompass scenarios where the discount remains constant and is applied exponentially, we do not specifically address situations where the discount is applied in a hyperbolic manner, as discussed by \citet{fedus2019hyperbolic}.

\end{document}